\documentclass{article} 
\usepackage{dfr_conference,times}

\usepackage{amsmath,amsfonts,bm}

\def\eqref#1{equation~\ref{#1}}

\def\1{\bm{1}}

\def\vc{{\bm{c}}}

\def\vv{{{v}}}

\def\vx{{\bm{x}}}
\def\vy{{\bm{y}}}

\DeclareMathAlphabet{\mathsfit}{\encodingdefault}{\sfdefault}{m}{sl}
\SetMathAlphabet{\mathsfit}{bold}{\encodingdefault}{\sfdefault}{bx}{n}

\DeclareMathOperator*{\argmin}{arg\,min}

\usepackage{hyperref}
\usepackage{url}
\usepackage[most]{tcolorbox}
\usepackage{xcolor}
\usepackage{mathtools}
\usepackage{booktabs}
\usepackage[font=small]{caption}
\usepackage{graphicx}
\usepackage[normalem]{ulem}
\usepackage{amssymb}
\usepackage{enumitem}
\usepackage{amsthm}
\usepackage{thm-restate}
\usepackage{xspace}
\usepackage{algorithm}
\usepackage{algpseudocode}
\usepackage{colortbl}
\usepackage{subcaption}
\newcommand{\mstd}[2]{\begin{tabular}[c]{@{}c@{}}#1\\[-2pt]{\scriptsize$\pm$#2}\end{tabular}}

\newtcolorbox{questionbox}{
    colback=gray!8,
    colframe=gray!8,
    boxrule=0pt,
    arc=2pt,
    left=1pt,
    right=1pt,
    top=1pt,
    bottom=1pt,
    width=\linewidth,
    halign=center,
    before=\begin{center},
    after=\end{center},
}

\definecolor{lightgrayblue}{RGB}{236,238,252}
\definecolor{grayblue}{RGB}{120,135,155}
\definecolor{lightred}{RGB}{253,246,246}
\definecolor{mutedred}{RGB}{175,105,105}

\newcommand{\sg}{\operatorname{sg}}
\newcommand{\FD}{\mathrm{FD}}

\newcommand{\methodname}{\textsc{DFR}\xspace}

\title{Distribution-based regularization for reward fine-tuning diffusion and flow models}

\author{
\begin{tabular}{cccc}
\textbf{Romeo Passaro}$^{*,1,2}$ &
\textbf{Jeongjae Lee}$^{*,3,4,5}$ &
\textbf{Jong Chul Ye}$^{3}$ &
\textbf{Michael M. Bronstein}$^{1,2}$
\end{tabular}
\\[0.8em]
\begin{tabular}{ccc}
\textbf{Yoshua Bengio}$^{4,5}$ &
\textbf{Alexander Tong}$^{2,\dagger}$ &
\textbf{Danyal Rehman}$^{2,4,5,\dagger}$
\end{tabular}
\\[1.0em]
{\small
$^{1}$University of Oxford
\quad
$^{2}$AITHYRA
\quad
$^{3}$KAIST
\quad
$^{4}$Mila – Québec AI Institute
\quad
$^{5}$Université de Montréal
}
\\[0.35em]
{\small
$^{*}$Equal contribution; ordering determined by a random draw.
\qquad
$^{\dagger}$Corresponding authors.
}
}

\iclrfinalcopy
\begin{document}

\maketitle
\lhead{Preprint.}
\begin{abstract}
\looseness=-1
Reward fine-tuning is an effective way to adapt pretrained diffusion and flow-matching models to downstream objectives, but aggressive optimization often leads to mode collapse and distributional drift. Existing methods attempt to control this drift with reverse-KL regularization, implemented as an $\ell_2$ penalty on the velocity field;\ however, this functions by indirectly constraining sampling dynamics rather than directly acting on the generated distribution. To address this, we propose \methodname (Distributional Functional Rewards), which augments the task reward with a distribution-dependent, per-sample reward that acts explicitly on the generated distribution. Formally, we generalize the KL-regularized objective from expected rewards, which are linear in the model's distribution, to convex functionals of it. The first variation of this objective defines an effective, distribution-dependent reward that directly penalizes deviation from the pretrained distribution. This formulation yields additive per-sample rewards for the Fréchet distance (FD), maximum mean discrepancy (MMD), and DreamSim diversity, recovering several recently proposed rewards as special cases. Across various text-to-image fine-tuning tasks, 
\methodname matches the peak reward of KL-tuned baselines while increasing diversity by up to 29\% at equal or lower drift.
\end{abstract}
\begin{center}
\vspace{-0.5em}
\textbf{Code:} \url{https://github.com/jaylee2000/dfr}
\end{center}

\section{Introduction}
\looseness=-1
Diffusion and flow-matching models~\citep{pmlr-v37-sohl-dickstein15,ho2020ddpm,song2021scorebased,lipman2023flow,liu2023flow,albergo_stochastic_2023,peluchetti2023non} have become a dominant paradigm for generative modeling across various data modalities, including images~\citep{rombach2022high,labs2025flux1kontextflowmatching}, videos~\citep{wan2025wanopenadvancedlargescale}, and molecular structures~\citep{alphafold3}.
These models learn stochastic or deterministic dynamics that transport a simple source distribution to a complex target distribution.
However, downstream applications rarely value all samples equally:\ the goal is often to shift probability mass toward high-reward samples, while preserving the diversity and capabilities of the pretrained model.
\emph{Reward fine-tuning} achieves this by adapting the pretrained model toward a downstream reward~\citep{uehara2024understandingreinforcementlearningbasedfinetuning}.

\looseness=-1
For non-differentiable rewards, existing approaches broadly fall into two families:\ (1) \emph{Trajectory}-based methods, which formulate sampling as a multi-step Markov decision process (MDP), treating each step of the sampling process as an action~\citep{black2023ddpo,fan2023dpok};\ and (2) \emph{ELBO}-based methods, which instead treat the final sample as the action and approximate its intractable log-likelihood with an evidence lower bound~\citep{mcallister2026flow}. Recently, ELBO-based methods have emerged as particularly attractive options for large-scale text-to-image fine-tuning, reporting up to $\sim 50\times$
faster training than trajectory-based methods at comparable reward and image quality~\citep{zheng2025diffusionnft,xue2025advantageweightedmatchingaligning,bergmeister2026reinforceadjointmatchingscaling}. 

\looseness=-1
Across both families, however, reward maximization alone does not specify which properties of the pretrained distribution ought to be preserved.
In particular, optimizing toward increasingly high-reward samples can reduce sample diversity and move the generated distribution away from the pretrained model.
Existing methods often address this tension through KL regularization, typically implemented as an $\ell_2$ penalty between the fine-tuned and pretrained velocity fields~\citep{fan2023dpok,liu2025flowgrpo}. However, this regularization acts on the model's \emph{sampling dynamics}, whereas the properties of ultimate interest---diversity and retention of the pretrained distribution---are properties of its \emph{output distribution}.
This raises the natural question: 

\begin{questionbox}
Can we regularize the generated distribution directly during reward fine-tuning?
\end{questionbox}

We answer this question affirmatively with Distributional Functional Rewards (\methodname), a general framework for distribution-regularized reward fine-tuning. \methodname unifies trajectory- and ELBO-based methods by augmenting reward optimization with objectives defined directly on the generated distribution. This allows models to improve reward while better preserving the pretrained distribution. Our main contributions are:
\looseness=-1
\begin{itemize}[leftmargin=*,itemsep=0pt,topsep=2pt]
\looseness=-1
\item \textbf{A unified framework for distribution-regularized reward fine-tuning.}
\methodname generalizes KL-regularized reward optimization to convex functionals of the clean-sample distribution, including maximum mean discrepancy (MMD), Fr\'echet distance (FD), and diversity objectives. Their first variations induce distribution-dependent per-sample reward corrections, enabling direct control of the generated distribution in both trajectory- and ELBO-based methods.
\looseness=-1
\item \textbf{Improved distribution preservation at matched reward.}
On GenEval and OCR with SD3.5-M, \methodname reduces drift from the pretrained model by up to 12\% and increases diversity by up to 29\% relative to KL-regularized baselines at matched reward, while training 36--41$\times$ faster than Flow-GRPO.
\looseness=-1
\item \textbf{Compatibility with both feature- and latent-space objectives.}
We instantiate \methodname with feature-space objectives such as FD and DreamSim diversity, as well as latent-space MMD, demonstrating strong performance even without the use of pretrained feature encoders.
\looseness=-1
\item \textbf{Minimal overhead over KL-regularized fine-tuning.}
Compared with KL-tuned RAM, \methodname adds little training cost: \methodname-MMD reaches GenEval 0.95 with only 7\% more training time than RAM ($\beta=$ 3) while achieving 9\% higher diversity, and \methodname-DS reaches OCR 0.96 with only 1\% more time than RAM ($\beta=$ 5) while achieving 17\% higher diversity.
\end{itemize}

\begin{figure*}[t]
\centering
\includegraphics[width=\linewidth]{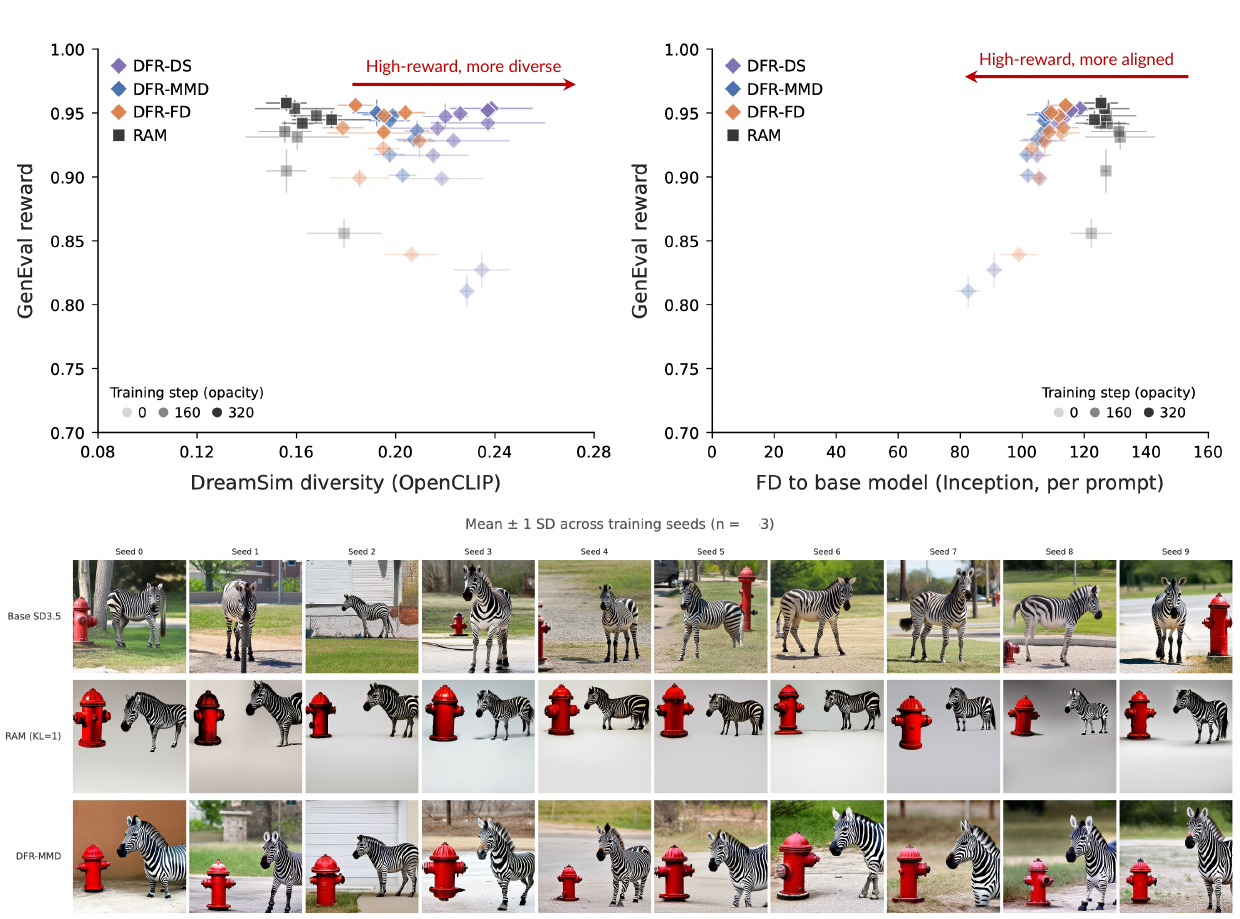}
\caption{\textbf{Top}: RAM achieves high rewards at the cost of sample diversity and distributional fidelity, whereas \methodname matches---and often surpasses---its rewards while preserving diversity and FID relative to the pretrained distribution. \textbf{Bottom}: \methodname more closely mirrors the pretrained SD3.5 distribution, generating a wider diversity of images relative to RAM across multiple seeds for the same prompt: \textit{``A zebra to the right of a fire hydrant.''}}
\label{fig:reward-curves}
\end{figure*}

\section{Background}
\subsection{Flow-Based Models}
\label{sec:flow_preliminaries}
\looseness=-1
Flow-based models---including diffusion and flow-matching frameworks~\citep{lipman2023flow,song2021scorebased}---learn a time-dependent velocity field $\vv_\theta(\vx_t,t)$ that transports samples between a source distribution, typically Gaussian noise $\pmb\epsilon\sim\mathcal N(0,I)$, and the data distribution $\vx_0\sim p_{\mathrm{data}}$. For the commonly used linear path $\vx_t=(1-t)\vx_0+t\pmb\epsilon$, the corresponding target velocity is simply $\dot{\vx}_t=\pmb\epsilon-\vx_0$. The velocity field is therefore learned by weighted regression toward this target:
\begin{equation}
    \mathcal L_{\mathrm{FM}}(\theta)
    =
    \mathbb E_{t,\vx_0,\pmb\epsilon}
    \left[
        w(t)
        \left\|
        \vv_\theta(\vx_t,t)-(\pmb\epsilon-\vx_0)
        \right\|_2^2
    \right],
    \label{eq:flow_matching}
\end{equation}
where $w(t)>0$ is a timestep-dependent weighting function. At inference, samples are generated by drawing $\pmb\epsilon\sim\mathcal N(0,I)$ and integrating the learned dynamics backward from $t=1$ to $t=0$. We denote the frozen pretrained velocity field by $\vv_{\text{ref}}$ and the endpoint distribution induced by $\vv_\theta$ under the chosen sampler by $p_\theta(\vx_0)$, omitting conditioning for brevity. This integration takes the form of an SDE for diffusion models and an ODE for flow-matching models;\ we refer the reader to~\cite{tong_simulation-free_2023,liu2025flowgrpo} for a more detailed equivalence between the two formulations.

\subsection{Reward fine-tuning via policy gradients}
\label{sec:background_pg}
\looseness=-1
Reward fine-tuning adapts a pretrained generative model to a downstream reward while regularizing deviations from the pretrained distribution.\footnote{See~\cite{uehara2025inferencetimealignmentdiffusionmodels} for a review of inference-time reward alignment.}
This is most often formulated as KL-regularized reward maximization:
\begin{equation}
\max_\theta \;
\mathbb E_{\vx_0 \sim p_\theta}[r(\vx_0)]
- \beta \mathcal D_{\mathrm{KL}}
\left(p_\theta(\vx_0)\,\|\,p_{\text{ref}}(\vx_0)\right),
\label{eq:maxentrl}
\end{equation}
\looseness=-1
where $\beta>0$ controls the reverse-KL regularization strength, and $p_\theta$ and $p_{\mathrm{ref}}$ are the endpoint distributions of the fine-tuned and pretrained models, respectively. For non-differentiable rewards, the gradient of the expected rewards can be evaluated using the REINFORCE identity~\citep{Williams:92}:
\begin{equation}
\nabla_\theta \mathbb E_{\vx_0\sim p_\theta}[r(\vx_0)]
=
\mathbb E_{\vx_0\sim p_\theta}
\left[
(r(\vx_0)-b)\nabla_\theta\log p_\theta(\vx_0)
\right],
\label{eq:reinforce}
\end{equation}
where $b$ is a sample-independent baseline used to reduce variance, and the difference $r(\vx_0)-b$ defines the advantage. In practice, advantages are often normalized within groups of samples from the same prompt~\citep{shao2024deepseekmath}, while clipped likelihood ratios can further stabilize training optimization~\citep{schulman2017proximalpolicyoptimizationalgorithms}.

For flow-based models, however, the marginal log-likelihood $\log p_\theta(\vx_0)$ is intractable. Trajectory-based methods replace it with the per-step transition likelihoods of a stochastic sampler.
ELBO-based methods instead replace it with $-\mathcal L_\text{FM}(\theta;\vx_0)$, the loss of Eq.~(\ref{eq:flow_matching}) evaluated on a single generated sample.
For suitable $w(t)$, this is an evidence lower bound up to an additive constant~\citep{mcallister2026flow,choi2026rethinking}.
Substituting it into Eq.~(\ref{eq:reinforce}) gives online reward-weighted regression
(RWR): the model is trained with the flow-matching loss on its own samples, each
weighted by its advantage~\citep{fan2025online,xue2025advantageweightedmatchingaligning}.
We build on RAM~\citep{bergmeister2026reinforceadjointmatchingscaling}, which takes this
form.
See Appendix~\ref{appendix:unifying-online-rwr} for a detailed derivation of these equivalences.

\textbf{Regularization.}
While trajectory- and ELBO-based methods differ in how they estimate the reward gradient, both typically approximate the reverse-KL term in Eq.~(\ref{eq:maxentrl}) using a squared velocity anchor:
\begin{equation}
    \mathcal L_{\text{anchor}}(\theta)
    =
    \mathbb E
    \left[
        \left\|
            \vv_\theta(\vx_t,t)-\vv_{\text{ref}}(\vx_t,t)
        \right\|_2^2
    \right].
    \label{eq:velocity_anchor}
\end{equation}
For Gaussian transition kernels, a time-weighted form of Eq.~(\ref{eq:velocity_anchor}) equals the KL divergence between the path distributions
of the fine-tuned and pretrained samplers, which upper-bounds the reverse-KL divergence of Eq.~(\ref{eq:maxentrl})~\citep[Equation~7]{fan2023dpok}.

\subsection{Distributional Measures}
\label{sec:distributional_objectives}
\looseness=-1
\textbf{Fr\'echet distance (FD).}
For a feature encoder $\phi$, let $\mu_p$ and $\Sigma_p$ denote the mean and covariance of $\phi(\vx)$ under $\vx \sim p$. The Fr\'echet distance (FD) compares the first two feature moments of $p$ and $q$:
\begin{equation}
\FD(p,q)
\triangleq
\|\mu_p-\mu_q\|_2^2
+
\operatorname{tr}\!\left(
\Sigma_p+\Sigma_q
-
2\bigl(\Sigma_q^{1/2}\Sigma_p\Sigma_q^{1/2}\bigr)^{1/2}
\right).
\label{eq:fd}
\end{equation}

\textbf{Maximum mean discrepancy (MMD)} is a classical kernel two-sample discrepancy~\citep{mmd2006}. For a chosen kernel $k(\cdot, \cdot)$,
\begin{equation}
\mathrm{MMD}_k^2(p, q)
=
\mathbb E_{\vx,\vx'\sim p}[k(\vx,\vx')]
+
\mathbb E_{\vy,\vy'\sim q}[k(\vy,\vy')]
-
2\mathbb E_{\substack{\vx\sim p\\\vy\sim q}}[k(\vx,\vy)],
\label{eq:mmd}
\end{equation}
where a characteristic kernel ensures $\mathrm{MMD}_k(p,q)=0$ if and only if $p=q$. Note, $\mathrm{MMD}$ requires no feature encoder and can be computed directly in latent space.

\textbf{DreamSim diversity.}
DreamSim~\citep{fu2023dreamsim} is a perceptual distance trained to align with human judgments of visual similarity.
Given the normalized DreamSim feature
$z(\vx) = \phi_{\mathrm{DS}}(\vx) / \|\phi_{\mathrm{DS}}(\vx)\|_2$,
DreamSim diversity~\citep{domingo-enrich2025adjoint} is the mean pairwise cosine
distance between independent samples, which equals the total variance of the
normalized features:
\begin{equation}
\mathrm{Div}_{\mathrm{DS}}(p)
=
\mathbb E_{\vx,\vx'\sim p}\bigl[1 - z(\vx)^\top z(\vx')\bigr]
=
1 - \bigl\|\mathbb E_{\vx\sim p}[z(\vx)]\bigr\|_2^2
=
\operatorname{tr}\,\mathrm{Cov}_{\vx\sim p}[z(\vx)].
\label{eq:dreamsim-diversity}
\end{equation}

\section{\methodname: Distributional Regularization for Reward Fine-Tuning}
\looseness=-1
The velocity anchor in Eq.~(\ref{eq:velocity_anchor}) only indirectly controls the reverse-KL divergence in Eq.~(\ref{eq:maxentrl}), as it penalizes changes in the sampling dynamics even when the endpoint distribution $p_\theta$ is unchanged. In ELBO-based methods, heuristic time weighting at forward-noised states further weakens this correspondence. We nevertheless retain the velocity anchor for training stability and to limit drift, yet we complement it with a term that acts directly on the generated distribution. Importantly, the two objectives are compatible:\ the optimal fine-tuned process preserves the pretrained model's conditional noising process (demonstrated in Appendix~\ref{app:backward-consistency}).

We generalize the KL-regularized objective of Eq.~(\ref{eq:maxentrl}) from expected rewards to convex functionals of the generated distribution
(Section~\ref{sec:distributional_regularization}). The first variation of such a functional acts as a per-sample reward, which any
policy gradient method can optimize without further changes (Section~\ref{sec:method_online_rwr}), and instantiate the framework with MMD, FD, and DreamSim diversity (Section~\ref{subsec:special-cases-of-ours}).

\subsection{Distributional regularization through convex functionals}
\label{sec:distributional_regularization}
Let \(\mathcal F:\mathcal P(\mathcal X)\rightarrow\mathbb R\) be a convex functional of the generated distribution.
Consider the entropy-regularized optimization problem
\begin{equation}
p^\star
=
\argmin_{p\in\mathcal P(\mathcal X)}
\left(
\mathcal F(p)
+
\beta\,\mathcal{D}_\text{KL}(p\Vert p_\text{ref})
\right),
\qquad \beta>0.
\label{eq:general_functional_problem}
\end{equation}
For a signed measure $\eta$ on $\mathcal X$ with $\eta(\mathcal X)=0$, the first-order
change of $\mathcal F$ along $\eta$ defines the \emph{functional derivative}
$\frac{\delta\mathcal F}{\delta p}(p,\cdot):\mathcal X\rightarrow\mathbb R$ through
\begin{equation}
\lim_{\varepsilon\to 0}
\frac{\mathcal F(p+\varepsilon\eta)-\mathcal F(p)}{\varepsilon}
\triangleq
\int \frac{\delta\mathcal F}{\delta p}(p,\vx)\,\eta(\mathrm d\vx).
\label{eq:functional_derivative}
\end{equation}
Since $\eta$ integrates to zero, $\frac{\delta\mathcal F}{\delta p}$ is defined only
up to an additive constant. Under standard regularity and integrability assumptions, this (strictly) convex minimization problem has a unique minimum. Standard variational calculus techniques identify the unique minimizer with the following equation:
\begin{equation}
p^\star(\vx) = \frac{1}{Z(p^\star)}\, p_\text{ref}(\vx)
\exp\left(
-\frac{1}{\beta}
\frac{\delta\mathcal F}{\delta p}(p^\star,\vx)
\right),
\label{eq:general_functional_gibbs}
\end{equation}
where \(Z(p^\star)\) is the normalizing constant.
Eq.~(\ref{eq:general_functional_gibbs}) is generally an implicit fixed point condition because the functional derivative may itself depend on \(p^\star\).
We refer the reader to \citet{hu2021meanfield} for a formal treatment of this problem, and to Appendix~\ref{sec:loose-derivation} for a looser derivation.
We apply this result to two choices of $\mathcal F$: the expected reward, and the
expected reward with a convex distributional penalty.

\textbf{Expected rewards as the linear case.}
Standard expected reward maximization, Eq.~(\ref{eq:maxentrl}), corresponds to $\mathcal F(p)=-\mathbb E_{\vx\sim p}[r(\vx)]$, which is linear in $p$ with first variation $-r(\vx)$. Eq.~(\ref{eq:general_functional_gibbs}) then reduces to the familiar exponential tilt:
\begin{equation}
p^\star(\vx)
=
\frac{1}{Z}\,
p_\text{ref}(\vx)\exp\left(\frac{r(\vx)}{\beta}\right),
\label{eq:standard_reward_tilt}
\end{equation}
whose right-hand side has no dependence on $p^\star$.

\textbf{Distribution-level effective reward.}
Distributionally regularized reward maximization corresponds to
\begin{equation}
\mathcal F(p) = -\mathbb E_{\vx\sim p}[r(\vx)] + \lambda\,\mathcal R(p),
\qquad \lambda\geq 0,
\label{eq:general_distributional_regularization}
\end{equation}
where $\mathcal R(p)$ is a convex functional that measures how far $p$ is from
$p_\text{ref}$. For brevity, we omit its dependence on the fixed $p_\text{ref}$.
This $\mathcal F$ is no longer linear but remains convex in $p$, with first variation
\begin{equation}
\frac{\delta\mathcal F}{\delta p}(p,\vx)
= -r(\vx) + \lambda\,\frac{\delta\mathcal R}{\delta p}(p,\vx),
\end{equation}
which, unlike in the linear case, depends on $p$.
We call its negative the \emph{effective reward}\footnote{Here and below, the argument after the semicolon is the distribution at which the
derivative is taken, and $\vx$ is the sample at which it is evaluated. For example,
$\mathcal R(p)=\tfrac12\|\mathbb E_{\vx\sim p}[z(\vx)]\|_2^2$ gives
$r_{\mathcal R}(\vx;p)=-z(\vx)^\top\mathbb E_{\vx'\sim p}[z(\vx')]$.}:
\begin{equation}
r_{\text{eff}}(\vx;p) \triangleq r(\vx) + \lambda\, r_{\mathcal R}(\vx;p),
\qquad
r_{\mathcal R}(\vx;p) \triangleq -\frac{\delta\mathcal R}{\delta p}(p,\vx).
\label{eq:general_effective_reward}
\end{equation}

Substituting Eq.~(\ref{eq:general_effective_reward}) into
Eq.~(\ref{eq:general_functional_gibbs}) gives
\begin{equation}
p^\star(\vx) = \frac{1}{Z(p^\star)}\,p_\text{ref}(\vx)
\exp\left(\frac{r_{\text{eff}}(\vx;p^\star)}{\beta}\right).
\label{eq:general_distribution_dependent_tilt}
\end{equation}
This is the exponential tilt of Eq.~(\ref{eq:standard_reward_tilt}) with $r$ replaced by
$r_{\text{eff}}$, so the penalty acts exactly like an additional reward. Unlike
Eq.~(\ref{eq:standard_reward_tilt}), it cannot be evaluated directly, because
$r_{\text{eff}}$ depends on the unknown $p^\star$. We therefore approach $p^\star$ by
gradient descent, evaluating $r_{\text{eff}}$ at the current model instead.

\subsection{Optimizing distributional functionals via policy gradients}
\label{sec:method_online_rwr}

\begin{algorithm}[!t]
\caption{Online RWR with \methodname. The shaded line is the only change to the baseline.}
\label{alg:method}
\begin{algorithmic}[1]
\Require {pretrained velocity $\vv_\text{ref}$; target reward $r$; distributional functional $\mathcal R$; coefficients $\beta$, $1/\alpha$, $\lambda$; group size $G$}
\State $\theta \gets \theta_\text{ref}$, \quad $\theta_\text{old} \gets \theta_\text{ref}$
\For{each iteration}
    \For{each prompt $\vc$}
        \State Sample a group $\{\vx_0^{i}\}_{i=1}^{G} \sim p_{\theta_\text{old}}(\cdot \mid \vc)$
        \State $A^{i} \gets \big(r(\vx_0^{i}, \vc) - \operatorname{mean}_j\, r(\vx_0^{j}, \vc)\big) / \sigma_r$
            \Comment{group-centered advantage}
        \State \colorbox{lightgrayblue}{$A^{i} \gets A^{i} + \lambda \big(r_{\mathcal R}(\vx_0^{i}; \hat p) - \operatorname{mean}_j\, r_{\mathcal R}(\vx_0^{j}; \hat p)\big)$}
            \Comment{$r_{\mathcal R} = -\delta \mathcal R / \delta p$}
    \EndFor
    \For{each sample $(\vx_0, \vc, A)$, with $t \sim p(t)$ and $\pmb\epsilon \sim \mathcal N(0, I)$}
        \State $\vx_t \gets (1-t)\,\vx_0 + t\,\pmb\epsilon$, \quad $\vv_\text{fwd} \gets \pmb\epsilon - \vx_0$
        \State $\vv_\text{tgt} \gets \sg\!\big[\vv_\theta + \beta\,(\vv_\text{ref} - \vv_\theta) + \tfrac{1}{\alpha} A\,(\vv_\text{fwd} - \vv_\text{old})\big]$
            \Comment{all velocities at $(\vx_t, t, \vc)$}
        \State Take a gradient step on $\|\vv_\theta(\vx_t, t, \vc) - \vv_\text{tgt}\|_2^2$
    \EndFor
    \State Update $\theta_\text{old}$ as a lagged (EMA) copy of $\theta$
\EndFor
\end{algorithmic}
\end{algorithm}

We minimize $\mathcal F(p_\theta)$ over the generator parameters $\theta$; the reverse-KL
term of Eq.~(\ref{eq:general_functional_problem}) is handled separately below. An
infinitesimal change in $\theta$ perturbs the distribution in the direction
$\eta=\nabla_\theta p_\theta$, which has zero total mass, so the definition in
Eq.~(\ref{eq:functional_derivative}) gives the chain rule
\begin{equation}
\nabla_\theta\mathcal F(p_\theta)
= \int \frac{\delta\mathcal F}{\delta p}(p_\theta,\vx)\,
\nabla_\theta p_\theta(\vx)\,\mathrm d\vx .
\label{eq:functional_score_identity}
\end{equation}
Using $\nabla_\theta p_\theta=p_\theta\nabla_\theta\log p_\theta$ and
Eq.~(\ref{eq:general_effective_reward}),
\begin{equation}
-\nabla_\theta\mathcal F(p_\theta)
= \mathbb E_{\vx\sim p_\theta}\left[
r_{\text{eff}}(\vx;p_\theta)\,\nabla_\theta\log p_\theta(\vx)\right].
\label{eq:effective_policy_gradient}
\end{equation}
This is the REINFORCE identity of Eq.~(\ref{eq:reinforce}) with $r_{\text{eff}}$ in place
of $r$. As in the ordinary chain rule, the derivative is evaluated at the current
$p_\theta$ and is not itself differentiated: $r_{\text{eff}}(\vx;p_\theta)$ is a scalar
weight attached to each sample, and no gradient flows through $\mathcal R$ or any feature
extractor it uses.

In practice, we draw a group of samples from $p_\theta$, estimate
$r_{\mathcal R}(\vx^i;p_\theta)$ for each sample from the statistics of that group
(denoted $\hat p$), and add $\lambda\,r_{\mathcal R}$ to the sample's advantage
(Algorithm~\ref{alg:method}, shaded line). Since
$\mathbb E_{\vx\sim p_\theta}[\nabla_\theta\log p_\theta(\vx)]=0$, adding a constant to
$r_{\text{eff}}$ leaves the gradient unchanged; the additive constant left free in
Eq.~(\ref{eq:functional_derivative}) is the baseline $b$ of Eq.~(\ref{eq:reinforce}),
which is why we may center $r_{\mathcal R}$ within each group.

\textbf{Application to flow models.}
\looseness=-1
We use RAM~\citep{bergmeister2026reinforceadjointmatchingscaling}, the state-of-the-art ELBO-based method,
as our base framework. The score in Eq.~(\ref{eq:effective_policy_gradient}) is replaced by the ELBO surrogate
$-\nabla_\theta\mathcal L_{\mathrm{FM}}(\theta;\vx)$ alluded to in~Eq.~(\ref{eq:flow_matching}), and the reverse-KL term in Eq.~(\ref{eq:general_functional_problem}) by $\mathcal L_\text{anchor}(\theta)$ (Eq.~(\ref{eq:velocity_anchor})). The complete training pipeline is provided in Algorithm~\ref{alg:method}.

\looseness=-1
Algorithm~\ref{alg:method} implements the population formulation with three finite-sample choices. All quantities are conditional:\ $\mathcal R$ is applied to $p_\theta(\cdot\mid\vc)$ and 
averaged over conditions $\vc$, which we omit for brevity. 
The distributional reward is centered within each group and added to the normalized task advantage rather than to $r$ before normalization;\ the two differ only in the rescaling of $\lambda$ by the advantage scale. The population statistics entering $r_{\mathcal R}$ are estimated from the group of $G$ samples for MMD and DreamSim diversity, excluding the sample itself from pairwise averages, and from an exponential moving average of feature moments pooled over prompts for FD (detailed in Appendix~\ref{app:fd-numerics}).

\subsection{Instantiations: MMD, FD, Dreamsim Diversity}
\label{subsec:special-cases-of-ours}
\looseness=-1
We instantiate $\mathcal R$ with three functionals: (1) the MMD reward, our primary choice, which requires no pretrained encoder; (2) the FD reward; and (3) the DreamSim reward.
The latter two show that \methodname subsumes recently proposed diversity and FD-based rewards as special cases; Section~\ref{sec:experiments} evaluates all three under the same protocol.
All three functionals are convex in $p$ (Appendix~\ref{app:convexity}), so the analysis in Section~\ref{sec:distributional_regularization} applies directly (for FD, see the remark in Appendix~\ref{app:fd-numerics}).

\textbf{MMD reward.} Let $\psi:\mathcal X\to\mathbb R^m$ be a fixed injective map and $b\in(0,2)$, and define
\begin{equation}
\rho_b(\vx,\vy)=\|\psi(\vx)-\psi(\vy)\|_2^b.
\label{eq:energy-semimetric}
\end{equation}
We use the identity map for $\psi$, meaning $\rho_b$ reduces to a power of the Euclidean distance in latent space and requires no pretrained encoder, although an encoder could be used instead. The corresponding MMD can be written as
\begin{align}
\operatorname{MMD}_{k_b}^2(p,p_\text{ref})
&=
\mathbb E_{\substack{\vx\sim p\\\vy\sim p_\text{ref}}}
\left[
\rho_b(\vx,\vy)
\right]
-
\frac{1}{2}
\mathbb E_{\vx,\vx'\sim p}
\left[
\rho_b(\vx,\vx')
\right]
-
\frac{1}{2}
\mathbb E_{\vy,\vy'\sim p_\text{ref}}
\left[
\rho_b(\vy,\vy')
\right].
\label{eq:energy-mmd-functional}
\end{align}
The proof is in Appendix~\ref{app:energy-mmd}. Consider the MMD functional 
$\mathcal R_{\mathrm{MMD}}(p,p_\text{ref})=\operatorname{MMD}_{k_b}^2(p,p_\text{ref})$.
\begin{restatable}[MMD reward]{proposition}{MMDEffectiveReward}
\label{prop:mmd-effective-reward}
A representative of the negative functional derivative of
\(\mathcal R_{\mathrm{MMD}}\) is
\begin{equation}
\boxed{
r_{\mathrm{MMD}}(\vx;p,p_\text{ref})
=
\mathbb E_{\vx'\sim p}
\left[
\rho_b(\vx,\vx')
\right]
-
\mathbb E_{\vy\sim p_\text{ref}}
\left[
\rho_b(\vx,\vy)
\right].
}
\label{eq:mmd-effective-reward}
\end{equation}
\end{restatable}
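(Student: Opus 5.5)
We compute the first variation directly from the representation of $\mathcal R_{\mathrm{MMD}}$ in Eq.~(\ref{eq:energy-mmd-functional}), taken as given (its proof is deferred to Appendix~\ref{app:energy-mmd}). That expression is a sum of three terms:
\begin{itemize}
\item a term linear in $p$, namely $\int\!\int \rho_b(\vx,\vy)\,p(\mathrm d\vx)\,p_\text{ref}(\mathrm d\vy)$;
\item a quadratic term, $-\tfrac12\int\!\int \rho_b(\vx,\vx')\,p(\mathrm d\vx)\,p(\mathrm d\vx')$;
\item a term independent of $p$.
\end{itemize}
Fix a signed measure $\eta$ with $\eta(\mathcal X)=0$. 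We substitute $p+\varepsilon\eta$ into each term and expand in $\varepsilon$, using Eq.~(\ref{eq:functional_derivative}) as the definition of the functional derivative.

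The term-by-term expansion goes as follows.
\begin{itemize}
\item The constant term contributes nothing.
\item The linear term changes by exactly $\varepsilon\int \mathbb E_{\vy\sim p_\text{ref}}[\rho_b(\vx,\vy)]\,\eta(\mathrm d\vx)$.
\item The quadratic term expands as $-\tfrac12\big[\iint\rho_b\,\mathrm dp\,\mathrm dp + 2\varepsilon\iint \rho_b(\vx,\vx')\,p(\mathrm d\vx')\,\eta(\mathrm d\vx) + \varepsilon^2\iint\rho_b\,\mathrm d\eta\,\mathrm d\eta\big]$. The cross term doubles because $\rho_b$ is symmetric, and the factor $2$ cancels the $\tfrac12$.
\end{itemize}
Dividing by $\varepsilon$ and letting $\varepsilon\to0$ gives
\[
\lim_{\varepsilon\to0}\frac{\mathcal R_{\mathrm{MMD}}(p+\varepsilon\eta)-\mathcal R_{\mathrm{MMD}}(p)}{\varepsilon}
=\int\Big(\mathbb E_{\vy\sim p_\text{ref}}[\rho_b(\vx,\vy)]-\mathbb E_{\vx'\sim p}[\rho_b(\vx,\vx')]\Big)\,\eta(\mathrm d\vx).
\]
So the bracketed function is a representative of $\frac{\delta\mathcal R_{\mathrm{MMD}}}{\delta p}(p,\cdot)$. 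It is only a representative because $\eta$ has zero mass, so any additive constant could be added. Negating it, as in Eq.~(\ref{eq:general_effective_reward}), yields exactly Eq.~(\ref{eq:mmd-effective-reward}).

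The only real obstacle is justifying the expansion. We need every integral to be finite, so that we may split the integrals and so that the $\varepsilon^2$ remainder vanishes after division by $\varepsilon$. Since $b<2$, $\rho_b(\vx,\vy)\le C(\|\psi(\vx)\|^b+\|\psi(\vy)\|^b)$. It therefore suffices to work with distributions $p$ and $p_\text{ref}$, and perturbations $|\eta|$, having finite $b$-th moment of $\psi$. This is the usual setting for energy distances, and it is the standard regularity assumption already invoked in Section~\ref{sec:distributional_regularization}.

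Under this assumption, Fubini applies to each double integral, the cross term is finite, and $\varepsilon\iint\rho_b\,\mathrm d\eta\,\mathrm d\eta\to0$. Note that no convexity or kernel positive-definiteness is needed for this computation; those properties matter only for the uniqueness statements elsewhere.
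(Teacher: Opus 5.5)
Your proposal is correct and follows the paper's proof essentially step for step: split Eq.~(\ref{eq:energy-mmd-functional}) into the linear, quadratic, and constant terms, perturb by a zero-mass $\eta$, use the symmetry of $\rho_b$ to double the cross term, and read off the integrand under a finite $b$-moment assumption. One small aside is that the growth bound $\rho_b(\vx,\vy)\le C(\|\psi(\vx)\|^b+\|\psi(\vy)\|^b)$ holds for every $b>0$ (with $C=2^{\max(b-1,0)}$), so it does not depend on $b<2$; that condition matters only for the negative-type kernel property, which, as you correctly note, this computation does not need.
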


The proof is given in Appendix~\ref{app:mmd-reward}.
The two terms in Eq.~(\ref{eq:mmd-effective-reward}) have complementary roles.
The first is a repulsive term that rewards samples far from other samples
generated by the current model, thereby discouraging collapse. The second is an
attractive term that penalizes samples far from the pretrained distribution,
thereby limiting distributional drift.

\textbf{FD reward.}
\methodname also recovers FD-based reward augmentation.
With $\mu_p,\Sigma_p$ as in Section~\ref{sec:distributional_objectives}, consider the FD functional $\mathcal R_{\mathrm{FD}}(p)=\FD(p,p_\text{ref})$ of Eq.~(\ref{eq:fd}).

\begin{restatable}[FD reward]{proposition}{FDEffectiveReward}
\label{prop:fd-effective-reward}
Assume the covariance matrices are positive definite and define
\begin{equation}
G_p
=
I
-
\Sigma_q^{1/2}
\left(
\Sigma_q^{1/2}\Sigma_p\Sigma_q^{1/2}
\right)^{-1/2}
\Sigma_q^{1/2}
\label{eq:fd-G}
\end{equation}
for $q \equiv p_\text{ref}$.
A representative of the negative functional derivative of
$\mathcal R_{\mathrm{FD}}$ is
\begin{equation}
\boxed{
r_{\mathrm{FD}}(\vx;p)
=
-
\left[
2(\mu_p-\mu_q)^\top f
+
(f-\mu_p)^\top G_p(f-\mu_p)
\right].
}
\label{eq:fd-effective-reward}
\end{equation}
\end{restatable}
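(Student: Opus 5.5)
We compute the first variation of $\mathcal R_{\mathrm{FD}}(p)=\FD(p,q)$ along a zero-mass perturbation $p_\varepsilon=p+\varepsilon\eta$, with $f=\phi(\vx)$. Since FD depends on $p$ only through $\mu_p$ and $\Sigma_p$, we first compute how these moments move and then apply the chain rule. We have $\dot\mu\triangleq\frac{d}{d\varepsilon}\mu_{p_\varepsilon}|_0=\int f\,\eta(\mathrm d\vx)$. Writing $\Sigma_p=\mathbb E_p[ff^\top]-\mu_p\mu_p^\top$, we get
\[
\dot\Sigma=\int ff^\top\,\eta(\mathrm d\vx)-\dot\mu\mu_p^\top-\mu_p\dot\mu^\top=\int (f-\mu_p)(f-\mu_p)^\top\,\eta(\mathrm d\vx),
\]
where the last equality uses $\eta(\mathcal X)=0$ to insert the constant $\mu_p\mu_p^\top$ term. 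So each moment derivative is the integral of an explicit per-sample quantity against $\eta$.

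The mean term is immediate: $\frac{d}{d\varepsilon}\|\mu_p-\mu_q\|^2=2(\mu_p-\mu_q)^\top\dot\mu=\int 2(\mu_p-\mu_q)^\top f\,\eta(\mathrm d\vx)$. For the trace term, $\operatorname{tr}\Sigma_p$ contributes $\operatorname{tr}\dot\Sigma$, and $\Sigma_q$ is constant. The key step is differentiating $T(\Sigma)=\operatorname{tr}(M^{1/2})$ with $M=\Sigma_q^{1/2}\Sigma\Sigma_q^{1/2}$.

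\textbf{Main obstacle: derivative of a trace of a matrix square root.} The naive route needs the Fréchet derivative of $M\mapsto M^{1/2}$, which solves a Sylvester equation. Under the trace this collapses. For $M\succ0$ (guaranteed by the positive-definiteness assumption), we have $d\operatorname{tr}(M^{1/2})=\tfrac12\operatorname{tr}(M^{-1/2}\,dM)$. One way to see this: differentiate $S^2=M$ to get $S\,dS+dS\,S=dM$, multiply by $S^{-1}$, take the trace, and use cyclicity to obtain $2\operatorname{tr}(dS)=\operatorname{tr}(S^{-1}dM)$. Alternatively, use the spectral calculus identity $d\operatorname{tr}g(M)=\operatorname{tr}(g'(M)dM)$. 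With $dM=\Sigma_q^{1/2}\dot\Sigma\Sigma_q^{1/2}$ this gives
\[
\frac{d}{d\varepsilon}\Bigl[-2\operatorname{tr}M^{1/2}\Bigr]=-\operatorname{tr}\bigl(\Sigma_q^{1/2}M^{-1/2}\Sigma_q^{1/2}\dot\Sigma\bigr).
\]
Combining with $\operatorname{tr}\dot\Sigma$ yields $\operatorname{tr}(G_p\dot\Sigma)$, with $G_p$ as in Eq.~(\ref{eq:fd-G}).

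Substituting $\dot\Sigma$ gives
\[
\operatorname{tr}(G_p\dot\Sigma)=\int (f-\mu_p)^\top G_p(f-\mu_p)\,\eta(\mathrm d\vx),
\]
using that $G_p$ is symmetric so the trace becomes a quadratic form. Collecting both contributions shows
\[
\frac{d}{d\varepsilon}\mathcal R_{\mathrm{FD}}(p_\varepsilon)\Big|_0=\int\Bigl[2(\mu_p-\mu_q)^\top f+(f-\mu_p)^\top G_p(f-\mu_p)\Bigr]\eta(\mathrm d\vx).
\]
By Eq.~(\ref{eq:functional_derivative}), the bracketed expression is a representative of $\delta\mathcal R_{\mathrm{FD}}/\delta p$, and its negative is $r_{\mathrm{FD}}$ in Eq.~(\ref{eq:fd-effective-reward}). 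Any additive constants, such as a $-2(\mu_p-\mu_q)^\top\mu_p$ term, are absorbed by the freedom of choosing a representative.

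A brief regularity remark completes the argument. It assumes $\eta$ has finite second moments of $f$ so the moments are differentiable. It also uses that $\Sigma_{p_\varepsilon}\succ0$ persists for small $\varepsilon$, so the square root is smooth there.
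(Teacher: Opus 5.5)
Your proposal is correct and follows essentially the same route as the paper: you compute the first variations $\int f\,\eta$ and $\int (f-\mu_p)(f-\mu_p)^\top\eta$ of the moments (using $\eta(\mathcal X)=0$ to insert $\mu_p\mu_p^\top$), obtain $G_p$ from $d\operatorname{tr}(S^{1/2})=\tfrac12\operatorname{tr}(S^{-1/2}dS)$, and combine the two by the chain rule. Your derivation of that trace identity from $S\,dS+dS\,S=dM$ fills in a step the paper only cites as standard. Symmetry of $G_p$ is not actually needed, since $\operatorname{tr}(Gzz^\top)=z^\top Gz$ holds for any square $G$.
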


The proof is given in Appendix~\ref{app:fd-reward}. The two terms in Eq.~(\ref{eq:fd-effective-reward}) have complementary roles.
The first penalizes shifts in the population feature mean, while the second corrects
changes in feature covariance.

\textbf{DreamSim diversity reward.}
\methodname also recovers pairwise perceptual diversity rewards.
With $z(\vx)$ defined in Section~\ref{sec:distributional_objectives},  the cosine similarity between two samples is
$z(\vx)^\top z(\vx')$.
We consider the population-average cosine-similarity functional
$\mathcal R_{\mathrm{DS}}(p)
=
\frac12 (1-\mathrm{Div}_{\mathrm{DS}}(p))$
of Eq.~(\ref{eq:dreamsim-diversity}).

\begin{restatable}[DreamSim reward]{proposition}{DreamSimEffectiveReward}
\label{prop:dreamsim-effective-reward}
A representative of the negative functional derivative of
\(\mathcal R_{\mathrm{DS}}\) is
\begin{equation}
\boxed{
r_{\mathrm{DS}}(\vx;p)
=
-
z(\vx)^\top
\mathbb E_{\vx'\sim p}[z(\vx')].
}
\label{eq:dreamsim-effective-reward}
\end{equation}
\end{restatable}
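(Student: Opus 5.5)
We compute the first variation of $\mathcal R_{\mathrm{DS}}$ directly from the definition in Eq.~(\ref{eq:functional_derivative}), using the closed form in Eq.~(\ref{eq:dreamsim-diversity}). That identity gives
\begin{equation*}
\mathcal R_{\mathrm{DS}}(p)=\tfrac12\bigl(1-\mathrm{Div}_{\mathrm{DS}}(p)\bigr)=\tfrac12\bigl\|m(p)\bigr\|_2^2,
\qquad m(p)\triangleq\int z(\vx)\,p(\mathrm d\vx).
\end{equation*}
The map $p\mapsto m(p)$ is linear in $p$, so $\mathcal R_{\mathrm{DS}}$ is a quadratic form composed with a linear map. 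This is exactly the example worked out in the footnote to Eq.~(\ref{eq:general_effective_reward}). The computation is well defined because $z$ takes values on the unit sphere, so $\int z\,\mathrm d\eta$ is finite for any finite signed measure $\eta$.

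First, fix a signed measure $\eta$ with $\eta(\mathcal X)=0$ and expand. Linearity gives $m(p+\varepsilon\eta)=m(p)+\varepsilon\int z\,\mathrm d\eta$. Hence
\begin{equation*}
\mathcal R_{\mathrm{DS}}(p+\varepsilon\eta)-\mathcal R_{\mathrm{DS}}(p)
=\varepsilon\, m(p)^\top\!\int z(\vx)\,\eta(\mathrm d\vx)+\tfrac{\varepsilon^2}{2}\Bigl\|\int z\,\mathrm d\eta\Bigr\|_2^2 .
\end{equation*}
Dividing by $\varepsilon$ and letting $\varepsilon\to0$ kills the second-order term. The limit is $\int z(\vx)^\top m(p)\,\eta(\mathrm d\vx)$. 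Matching this with Eq.~(\ref{eq:functional_derivative}) identifies
\begin{equation*}
\frac{\delta\mathcal R_{\mathrm{DS}}}{\delta p}(p,\vx)=z(\vx)^\top\,\mathbb E_{\vx'\sim p}[z(\vx')],
\end{equation*}
up to the additive constant allowed by $\eta(\mathcal X)=0$. Negating this expression gives Eq.~(\ref{eq:dreamsim-effective-reward}).

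As a cross-check, we would redo the computation from the pairwise form $\mathcal R_{\mathrm{DS}}(p)=\tfrac12\iint z(\vx)^\top z(\vx')\,p(\mathrm d\vx)\,p(\mathrm d\vx')$. The bilinear expansion yields two symmetric cross terms, each contributing $\tfrac12 z(\vx)^\top m(p)$. These sum to the same derivative and explain why the factor $\tfrac12$ in the definition of $\mathcal R_{\mathrm{DS}}$ cancels.

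There is no real obstacle here. The only points needing care are the justification of Eq.~(\ref{eq:dreamsim-diversity}), which rests on $\|z\|_2=1$ and is assumed from the paper, and the remark that the derivative is defined only up to a constant. That constant is why the statement speaks of ``a representative''.
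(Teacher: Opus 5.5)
Your proof is correct and is essentially the paper's argument. The paper expands the pairwise bilinear form $\tfrac12\iint z(\vx)^\top z(\vx')\,p(\mathrm d\vx)\,p(\mathrm d\vx')$ under $p+\varepsilon\eta$ exactly as in your cross-check: it symmetrizes the two cross terms, bounds the $\varepsilon^2$ term using $|z(\vx)^\top z(\vx')|\le 1$, and then pulls $z(\vx)$ out of the inner expectation; your main line does the same expansion in the equivalent mean-embedding form $\tfrac12\|m(p)\|_2^2$, which the paper itself uses in its footnote and convexity appendix. One small precision on your closing remark: the identity $1-\mathrm{Div}_{\mathrm{DS}}(p)=\|m(p)\|_2^2$ that you rely on comes from the independence of $\vx$ and $\vx'$, whereas $\|z\|_2=1$ is what gives boundedness (so $\int z\,\mathrm d\eta$ is finite) and the trace-of-covariance form.
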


The proof is given in Appendix~\ref{app:dreamsim-reward}.

\section{Experiments}
\label{sec:experiments}
\looseness=-1
We evaluate whether \methodname improves the trade-off between reward and reward hacking beyond what tuning the reverse-KL coefficient of the baseline can achieve.

\subsection{Setup}
\label{subsec:experimental-setup}
\textbf{Tasks and methods.}
\looseness=-1
We fine-tune Stable Diffusion 3.5 Medium (SD3.5-M;~\citealp{esser2024sd3}) on two tasks: (1) Compositional generation (GenEval;~\citealp{ghosh2023geneval}), with reward thresholds 0.95 and 0.96, and (2) Visual text rendering (OCR), with threshold 0.96.
We adopt RAM~\citep{bergmeister2026reinforceadjointmatchingscaling} as the ELBO-based framework on which \methodname builds on.
We sweep its reverse-KL coefficient $\beta$ over $\{1,\dots,4\}$ for GenEval and $\{2,\dots,6\}$ for OCR.
\methodname-MMD, \methodname-FD, and \methodname-DS augment RAM with the distributional rewards introduced in Section~\ref{subsec:special-cases-of-ours} (Algorithm~\ref{alg:method}). For each variant, we use the largest $\beta$ for which RAM reaches the highest reward threshold, or the next smaller value when the distributional reward itself provides substantial regularization toward the pretrained model. All configurations are trained with three random seeds under identical compute budgets. 

\looseness=-1
\textbf{Baselines. } We compare \methodname against other ELBO- and trajectory-based methods including DiffusionNFT~\citep{zheng2025diffusionnft}, AWM~\citep{xue2025advantageweightedmatchingaligning}, RSM~\citep{lee2026rewardscorematchingunifying}, and Flow-GRPO~\citep{liu2025flowgrpo} on both the compositional generation and visual text rendering tasks.

\looseness=-1
\textbf{Protocol.} Because reward hacking intensifies over the course of optimization, we compare models at matched target reward levels.
For each seed, we evaluate the first checkpoint that reaches a given threshold;\ a configuration is considered to reach that threshold only if all three seeds do so.

\looseness=-1
\textbf{Metrics.} Following \citet{liu2025resnabladb,liu2025vggflow,domingo-enrich2025adjoint}, we measure reward hacking by drift and diversity, each computed per-prompt over 64 images. \emph{Drift} is the Fr\'echet distance to the 64 images of $p_\text{ref}$ in Inception-v3 and DINOv2 features. \emph{Diversity} is measured by DreamSim diversity, computed with its DINO and OpenCLIP branches. Preference rewards such as ImageReward~\citep{xu2023imagereward} and PickScore~\citep{kirstain2023pickapic} are reported but not used, since they score each image in isolation and cannot detect a model that generates near-identical images for every seed. Two configurations are tied on a metric if their means differ by at most the larger of their standard deviations;\ otherwise, the configuration with the better mean wins. Entries computed with an encoder that a \methodname instance uses in its reward, or a closely related one, are counted as neither wins nor losses. Consequently, the main text reports the OpenCLIP branch, which no instance trains on. Additional training, evaluation, and baseline details are provided in Appendix~\ref{app:experiment-details}.

\subsection{Results}
\label{subsec:comparison}

\begin{table*}[t]
\centering
\vspace{-1em}
\caption{Fine-tuning on GenEval, compared at matched GenEval (Section~\ref{subsec:experimental-setup}).
Peak is the highest threshold it reaches or, if none, its mean GenEval at the evaluated checkpoints; ``--'' marks a threshold not reached on every seed.
Speedup is the time to reach $0.95$ relative to Flow-GRPO.
Entries of the form $a/b$ report results on the GenEval test prompts ($a$) and on DrawBench ($b$); RAM and \methodname{} entries are means over three seeds with SD beneath, and existing methods are single checkpoints evaluated at $0.95$. Blue marks a \methodname{} entry that wins against every RAM configuration reaching the same threshold; struck-through entries are computed with an encoder related to that instance's reward and are not compared.
}
\label{tab:comparison}
\resizebox{\textwidth}{!}{
\begin{tabular}{lcccccccc}
\toprule
& & & \multicolumn{3}{c}{\textbf{GenEval $0.95$}} & \multicolumn{3}{c}{\textbf{GenEval $0.96$}} \\
\cmidrule(lr){4-6}\cmidrule(lr){7-9}
\textbf{Method} & \textbf{Peak} & \textbf{Speedup} $(\uparrow)$ & \textbf{FID} $(\downarrow)$ & \textbf{FD-DINOv2} $(\downarrow)$ & \textbf{DreamSim} $(\uparrow)$ & \textbf{FID} $(\downarrow)$ & \textbf{FD-DINOv2} $(\downarrow)$ & \textbf{DreamSim} $(\uparrow)$ \\
\midrule
SD3.5-M & 0.63 & -- & -- & -- & 0.302/0.274 &  &  &  \\
\midrule
\multicolumn{9}{l}{\emph{Trajectory-based}} \\
Flow-GRPO &  & 1.0 & 114.32/125.61 & 1019/997 & 0.155/0.189 &  &  &  \\
RSM$^\S$ &  & 6.25 & 132.82/154.29 & 1271/1310 & 0.142/0.183 &  &  &  \\
\midrule
\multicolumn{9}{l}{\emph{ELBO-based}} \\
AWM$^{*}$ &  & 8.0 & 158.88/183.71 & 1412/1578 & 0.095/0.132 &  &  &  \\
DiffusionNFT &  & 24.0 & 144.36/159.22 & 1352/1437 & 0.109/0.143 &  &  &  \\
DiffusionNFT (multi)$^\ddagger$ &  & 2.28 & 145.06/175.20 & 1468/1702 & 0.102/0.106 &  &  &  \\
RAM ($\beta=1$) & 0.95 & 52.3 & \mstd{132.93/144.77}{16.19/16.96} & \mstd{1211/1236}{116/185} & \mstd{0.163/0.193}{0.021/0.021} & -- & -- & -- \\
\midrule
\multicolumn{9}{l}{\emph{RAM with larger $\beta$}} \\
RAM ($\beta=2$) & 0.96 & \mstd{57.2}{6.1} & \mstd{116.99/122.78}{1.69/1.02} & \mstd{1078/1002}{34/11} & \mstd{0.170/0.208}{0.009/0.006} & \mstd{119.87/125.99}{2.03/3.89} & \mstd{1066/1023}{31/33} & \mstd{0.174/0.213}{0.006/0.006} \\
RAM ($\beta=3$) & 0.95 & \mstd{43.9}{4.9} & \mstd{113.19/122.06}{2.83/1.84} & \mstd{1030/991}{11/9} & \mstd{0.184/0.214}{0.006/0.010} & -- & -- & -- \\
RAM ($\beta=4$) & 0.94 & -- & -- & -- & -- & -- & -- & -- \\
\midrule
\multicolumn{9}{l}{\emph{Ours}} \\
\methodname-MMD ($\beta=2$) & 0.96 & \mstd{41.1}{9.1} & \mstd{111.20/120.64}{4.10/6.17} & \mstd{\textcolor{blue}{990}/967}{27/52} & \mstd{\textcolor{blue}{0.200}/0.224}{0.015/0.011} & \mstd{\textcolor{blue}{110.75}/121.67}{3.04/6.05} & \mstd{\textcolor{blue}{992}/977}{5/47} & \mstd{\textcolor{blue}{0.196}/0.221}{0.011/0.009} \\
\methodname-FD ($\beta=2$) & 0.96 & \mstd{35.7}{4.6} & \mstd{111.02/\textcolor{blue}{117.09}}{2.00/4.88} & \mstd{\textcolor{blue}{980}/\textcolor{blue}{932}}{20/46} & \mstd{\textcolor{blue}{0.199}/\textcolor{blue}{0.231}}{0.003/0.006} & \mstd{\textcolor{blue}{111.95}/\textcolor{blue}{120.40}}{4.11/4.05} & \mstd{\textcolor{blue}{994}/\textcolor{blue}{969}}{46/38} & \mstd{\textcolor{blue}{0.193}/0.223}{0.011/0.013} \\
\methodname-DS ($\beta=2$) & 0.96 & \mstd{40.4}{5.2} & \mstd{114.60/121.86}{1.64/4.22} & \mstd{\sout{1034}/\sout{992}}{\sout{37}/\sout{82}} & \mstd{\textcolor{blue}{0.218}/\textcolor{blue}{0.233}}{0.007/0.010} & \mstd{\textcolor{blue}{114.47}/124.11}{0.31/3.88} & \mstd{\sout{1052}/\sout{1007}}{\sout{8}/\sout{45}} & \mstd{\textcolor{blue}{0.225}/\textcolor{blue}{0.240}}{0.022/0.018} \\
\bottomrule
\end{tabular}
}
\smallskip
\begin{minipage}{\linewidth}
\scriptsize
$^\S$RSM jumps from 0.94 to 0.97 between consecutive checkpoints, so we evaluate it at 0.94. \\
$^*$AWM's default configuration uses a $1.5\times$ larger effective batch size per step; we match the batch size to the other methods. \\
$^\ddagger$Released checkpoint (GenEval 0.94), sampled at its prescribed guidance scale of 1.0.
\end{minipage}
\vspace{-1em}
\end{table*}

\begin{table*}[t]
\centering
\caption{Fine-tuning on OCR, compared at OCR reward $0.96$. Columns and markings are as in Table~\ref{tab:comparison}, except that entries report results on the OCR test prompts ($a$) and on DrawBench ($b$), and Time is the time to reach $0.96$ relative to RAM ($\beta=2$).}
\label{tab:beta-sweep}
\resizebox{0.8\textwidth}{!}{
\begin{tabular}{lccccc}
\toprule
& & & \multicolumn{3}{c}{\textbf{OCR $0.96$}} \\
\cmidrule(lr){4-6}
\textbf{Method} & \textbf{Peak} & \textbf{Time} $(\downarrow)$ & \textbf{FID} $(\downarrow)$ & \textbf{FD-DINOv2} $(\downarrow)$ & \textbf{DreamSim} $(\uparrow)$ \\
\midrule
SD3.5-M & 0.59 & -- & -- & -- & 0.258/0.274 \\
\midrule
RAM ($\beta=2$) & 0.96 & 1.00 & \mstd{219.07/131.38}{24.04/7.71} & \mstd{1270/1040}{232/84} & \mstd{0.128/0.245}{0.010/0.009} \\
RAM ($\beta=3$) & 0.95 & -- & -- & -- & -- \\
RAM ($\beta=4$) & 0.96 & \mstd{1.85}{0.23} & \mstd{138.04/100.94}{6.73/4.36} & \mstd{667/751}{46/56} & \mstd{0.185/0.249}{0.004/0.005} \\
RAM ($\beta=5$) & 0.96 & \mstd{2.05}{0.09} & \mstd{136.78/97.16}{5.35/2.14} & \mstd{662/721}{39/21} & \mstd{0.192/0.252}{0.005/0.003} \\
RAM ($\beta=6$) & 0.95 & -- & -- & -- & -- \\
\midrule
\methodname-MMD ($\beta=4$) & 0.96 & \mstd{2.50}{0.19} & \mstd{\textcolor{blue}{124.87}/95.37}{0.83/1.60} & \mstd{\textcolor{blue}{597}/708}{5/16} & \mstd{0.197/\textcolor{blue}{0.261}}{0.006/0.008} \\
\methodname-FD ($\beta=4$) & 0.96 & \mstd{2.70}{0.34} & \mstd{\textcolor{blue}{121.46}/\textcolor{blue}{94.56}}{4.52/1.20} & \mstd{\textcolor{blue}{585}/705}{20/13} & \mstd{\textcolor{blue}{0.213}/0.255}{0.004/0.008} \\
\methodname-DS ($\beta=5$) & 0.96 & \mstd{2.08}{0.22} & \mstd{136.13/99.23}{4.59/3.20} & \mstd{\sout{653}/\sout{743}}{\sout{37}/\sout{27}} & \mstd{\textcolor{blue}{0.224}/\textcolor{blue}{0.281}}{0.008/0.003} \\
\bottomrule
\end{tabular}
}
\end{table*}

\begin{figure*}[t]
\centering
\includegraphics[width=\linewidth]{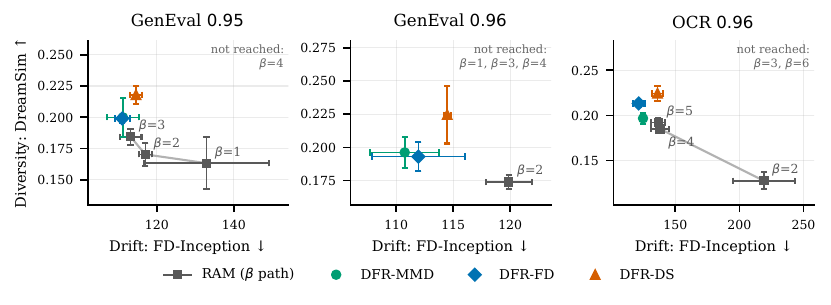}
\vspace{-1em}
\caption{Drift versus diversity at matched reward. Tuning $\beta$ moves RAM along a path (grey) that ends where a
larger $\beta$ no longer reaches the threshold. Every \methodname{} instance lies above or to the left of the end of
the path.}
\label{fig:beta-tradeoff}
\vspace{-1em}
\end{figure*}

\looseness=-1
\textbf{\methodname moves beyond the $\beta$ trade-off.} Every \methodname instance lies beyond RAM's $\beta$ path, with less drift and/or more diversity (Figure~\ref{fig:beta-tradeoff}). Further, \methodname also wins or ties against all RAM configurations that reaches the same threshold on every drift and diversity metric (Tables~\ref{tab:comparison} and~\ref{tab:beta-sweep}). The gap is widest at the highest threshold, GenEval 0.96: among RAM configurations only $\beta=$ 2 reaches it, whereas all three \methodname instances do, with lower drift (FID 111--114 vs.\ 120) and higher diversity (0.193--0.225 vs.\ 0.174). On OCR, \methodname-MMD and \methodname-FD drift the least, and \methodname-DS is the most diverse.

\textbf{The instances behave as their rewards predict.} The DreamSim reward is purely repulsive, whereas MMD and FD rewards also pull samples toward $p_\text{ref}$ (Section~\ref{subsec:special-cases-of-ours}). Accordingly, on every task and threshold, \methodname-DS attains the highest diversity of the three instances but the smallest reduction in drift, while \methodname-MMD and \methodname-FD reduce drift the most. On OCR, for example, \methodname-DS matches RAM ($\beta=5$) in FID (136 vs.\ 137), while raising diversity by 17\%, whereas \methodname-MMD and \methodname-FD lower FID to 121--125 despite running at the smaller $\beta=4$. The attractive term also acts as additional regularization: on OCR, \methodname-DS trains as fast as RAM at the same $\beta$, whereas \methodname-MMD and \methodname-FD require a smaller $\beta$ to reach the threshold and still take longer.

\textbf{\methodname-MMD matches \methodname-FD without an encoder.} \methodname-MMD computes its reward on SD3.5-M's own VAE latents with eight reference samples per prompt, whereas \methodname-FD relies on three pretrained encoders, precomputed reference statistics, and a per-encoder calibration. Despite this, \methodname-MMD ties \methodname-FD on every drift metric at every threshold, matches its diversity on GenEval, and reaches the thresholds at least as fast, although it is less diverse on OCR (0.197 vs.\ 0.213).

\textbf{\methodname adds modest cost.} Distributional rewards add only 3--9\% to the time per training step. Reaching a threshold takes longer than RAM at the same $\beta$, but the cost is comparable to that of reducing drift by raising $\beta$. Image quality and prompt generalization are also preserved:\ ImageReward and PickScore remain comparable to those of the RAM configurations that reach the same threshold.

\section{Related Work}
\label{sec:related-work}\looseness=-1
\textbf{Regularization for reward fine-tuning.}
Most reward fine-tuning methods for flow-based models use the velocity anchor in Eq.~(\ref{eq:velocity_anchor}), which constrains the sampling dynamics rather than $p_\theta$ itself (Section~\ref{sec:distributional_regularization}). DDRL~\citep{ye2025dataregularizedreinforcementlearningdiffusion} instead regularizes toward an off-policy data distribution using the forward-KL divergence, implemented through an ELBO objective on real data or pretrained-model samples. Flow Density Control~\citep{desanti2025fdc} also optimizes
functionals of the generated distribution, using gradients of
the effective reward.
\methodname instead uses the effective reward only as a scalar signal in policy-gradient
updates, without differentiating through it, which makes it directly applicable
to non-differentiable rewards, and we provide extensive
empirical evidence of scalability through systematic experiments
on SD3.5-M.

\textbf{Diversity and distributional rewards.}\looseness=-1
Objectives defined as functionals of a distribution have
been studied in reinforcement learning~\citep{hazan2019maxent,zhang2020variational}.
Several works augment the target reward with diversity or distributional signals.
GARDO~\citep{gardo2025} increases the rewards of high-quality samples far from their nearest neighbors, while DiverseGRPO~\citep{diversegrpo2025} assigns bonuses based on semantic-cluster sizes; neither signal is derived from an explicit distributional objective.
DRIFT~\citep{drift2026} rewards each sample by its mean squared DreamSim distance to the other samples in its group, which coincides with the DreamSim instance of \methodname (see Appendix~\ref{app:dreamsim-reward}).

Other methods match a prescribed target distribution. LOO-FID~\citep{loofid2026} and \citet{miao2024diverse} reward samples by their leave-one-out change in, respectively, the FD to real-image statistics and the MMD to curated reference images; these are finite-difference estimates of the corresponding first variations (Appendix~\ref{app:fd-loo}). kCGM~\citep{kcgm2026} minimizes MMD to a target feature distribution using an estimator of the form in Eq.~(\ref{eq:functional_score_identity}). DistWise~\citep{distwise2026} uses the FID between a large generated set and real data as a distribution-level reward, while DRAGON~\citep{bai2025dragon} contrasts positive and negative generation sets under distribution-level rewards.
In these methods, matching a real or curated target distribution---typically in a pretrained feature space---is itself the objective.
By contrast, \methodname derives per-sample rewards as first variations of convex functionals of $p_\theta$, yielding the explicit objective in Eq.~(\ref{eq:general_distributional_regularization}), where deviation from the pretrained model distribution regularizes target-reward maximization alongside the KL anchor. Consequently, \methodname requires neither an image dataset nor, for its MMD instance, a pretrained encoder.

\section{Conclusion}\looseness=-1
In this work, we introduce Distributional Functional Rewards (\methodname), a general-purpose framework for reward fine-tuning that complements KL regularization with objectives defined directly on the model's output distribution. By deriving per-sample rewards from the first variations of distributional functionals, \methodname provides explicit control over diversity and distributional drift. Our findings clearly demonstrate that KL regularization has a limited effective range: beyond it, stronger regularization slows optimization, limits reward improvement, and/or destabilizes training. On two distinct tasks: compositional generation and visual text rendering, we find that \methodname consistently improves the reward--distribution trade-off beyond what is attainable by tuning the KL coefficient alone. At matched task reward, every evaluated \methodname variant improves diversity, reduces drift, or both, while matching or improving all eligible distributional metrics relative to KL-only RAM baselines that reach the same reward threshold. These gains require only a simple reward augmentation, adding 3--9\% overhead per training step, while not relying on a pretrained encoder. Overall, \methodname enables reward fine-tuning to achieve high task rewards, while better preserving diversity and remaining closer to the pretrained model distribution than KL regularization alone. 

\looseness=-1
\textbf{Limitations and future work.} Although presented as a general-purpose reward fine-tuning framework, our experiments are focused on validation in image settings;\ extending \methodname to other modalities such as video and molecular structures is a natural next step to more effectively evaluate the generalization of the framework. Further, since the distributional reward regularizes toward $p_\text{ref}$, it preserves the pretrained model's biases along with its diversity, so its coefficient $\lambda$ must be tuned jointly with $\beta$. Balancing \methodname{}'s repulsive and attractive terms more carefully is a promising direction for improving this trade-off. Our characterization of the optimum assumes exact optimization of the population objective, which the ELBO surrogate and the finite-sample estimates of the distributional rewards approximate in practice.

\subsection*{Acknowledgments}
We thank Yeonwoo Cha for running baseline experiments and
for early discussions, and Jaemoo Choi and Hyungjin Chung
for early discussions. 

\newpage\clearpage
\bibliography{dfr_conference}
\bibliographystyle{dfr_conference}

\newpage\clearpage
\appendix
\section{Extended Related Work}
\label{app:related-work}

\textbf{Differentiable rewards.}
When the reward is differentiable, its gradient can be propagated directly through the generative process.
DRaFT~\citep{clark2024draft} backpropagates through the sampling chain, optionally truncating the gradient to the final steps, whereas ReFL~\citep{xu2023imagereward} backpropagates through a single late denoising step from which the clean sample is predicted directly.
Adjoint Matching~\citep{domingo-enrich2025adjoint} formulates fine-tuning as stochastic optimal control under a memoryless noise schedule, whose optimum solves Eq.~(\ref{eq:maxentrl}), and regresses the velocity toward a target defined by an adjoint state that propagates the reward gradient backward along the trajectory.
\citet{shin2026efficientadjointmatchingfinetuning} reduce its cost using a linear base drift, enabling few-step deterministic sampling and a closed-form adjoint.
VGG-Flow~\citep{liu2025vggflow} matches the difference between fine-tuned and pretrained velocities to the gradient of a learned value function, while $\nabla$-GFlowNet~\citep{liu2025resnabladb} derives gradient-informed GFlowNet objectives; both seek to preserve the pretrained prior and sample diversity.
For broader unifying perspectives on reward fine-tuning, we refer readers to \citet{uehara2024understandingreinforcementlearningbasedfinetuning,choi2026rethinking,lee2026rewardscorematchingunifying,xu2026designingreinforcementlearningdiffusion}.

\textbf{Non-differentiable rewards.}
Trajectory-based methods formulate denoising as a multi-step MDP and apply policy gradients to per-step transition likelihoods~\citep{black2023ddpo,fan2023dpok,liu2025flowgrpo,xue2025dancegrpo}.
Because these likelihoods require stochastic transitions, the sampling ODE is converted to an SDE with the same marginals.
\citet{li2025mixgrpo,he2025tempflowgrpo} improve efficiency by restricting stochastic sampling and optimization to a subset of timesteps.

ELBO-based methods instead treat the final sample as a single action and replace its intractable log-likelihood $\log p_\theta(\vx_0)$ with the ELBO surrogate $-\mathcal L_\text{FM}(\theta)$, producing an online reward-weighted regression objective on forward-noised samples~\citep{mcallister2026flow,fan2025online,xue2025advantageweightedmatchingaligning}.
Although this gives a biased estimator of the exact policy gradient, it is empirically effective.
Methods derived from other perspectives reduce to the same objective~\citep{zheng2025diffusionnft,bergmeister2026reinforceadjointmatchingscaling}; Appendix~\ref{appendix:unifying-online-rwr} gives these reductions in a common notation.
By decoupling optimization from sampling, ELBO-based methods can substantially accelerate reward maximization relative to trajectory-based methods~\citep{choi2026rethinking}.

\looseness=-1
\textbf{Distributional discrepancies as training losses}. Several few-step distillation methods match the output distribution of a student generator to that of a pretrained teacher, through the integral reverse-KL divergence~\citep{luo2024diffinstruct,yin2024onestep,yin2024improveddistributionmatchingdistillation}, general $f$-divergences~\citep{xu2025onestepdiffusionmodelsfdivergence}, or moment matching~\citep{salimans2024multistepdistillationdiffusionmodels}.
Other works match the data distribution directly.
\citet{zhou2025inductive} train few-step generators from scratch with an MMD-based moment-matching objective, and \citet{debortoli2025distributional} learn the full conditional distribution of clean data given a noisy state with scoring rules such as the energy score, rather than regressing its mean.
FD-loss~\citep{yang2026fdloss} post-trains generators by minimizing the Fr\'echet distance to real data in a representation space.

\subsection{Measuring fidelity and diversity}

\textbf{Distribution-level fidelity.}
FID~\citep{heusel2017fid} compares generated and real images through the Fr\'echet distance between Gaussian fits to their Inception-v3 features.
Its limitations have motivated several alternatives.
\citet{fddinov2} find that Inception features align poorly with human judgments of modern generators and recommend DINOv2 features instead.
KID~\citep{binkowski2018demystifying} and CMMD~\citep{jayasumana2024cmmd} replace FD with MMD, which drops the Gaussian assumption and admits an unbiased estimator.
Precision and recall~\citep{precisionrecall2019} separate fidelity from coverage by estimating the support of each distribution with $k$-nearest-neighbor balls in feature space.

\textbf{Diversity.}
LPIPS~\citep{zhang2018lpips} and DreamSim~\citep{fu2023dreamsim} compare images through deep features calibrated to human similarity judgments.
LPIPS mainly reflects low-level differences such as color and texture, whereas DreamSim also captures layout, pose, and semantic content.
The mean pairwise perceptual distance within a set of samples is a common diversity measure in reward fine-tuning~\citep{domingo-enrich2025adjoint,liu2025resnabladb,liu2025vggflow}.
The Vendi Score~\citep{friedman2023vendiscore} instead measures the effective number of distinct samples, as the exponential of the entropy of the eigenvalues of the normalized similarity matrix.

\section{Unifying online RWR}
\label{appendix:unifying-online-rwr}
Several recent ELBO-based methods have different motivations, but
their updates reduce to the same online reward-weighted regression (RWR)
objective~\citep{lee2026rewardscorematchingunifying,kveton2026advantageflowadvantageweightedsquaresrl,choi2026rethinking}.
This appendix derives the reduction for ORW-CFM-W2, FPO, AWM, RAM, DiffusionNFT, and
AdvantageFlow, and summarizes it in Table~\ref{tab:online-rwr-unified}.

\textbf{Notation.}
We translate every method into the conventions of
Section~\ref{sec:flow_preliminaries} and Algorithm~\ref{alg:method}.
Time $t=0$ is clean data and $t=1$ is noise
\footnote{ORW-CFM-W2 and FPO denote clean data at time $1$; we substitute $t\mapsto 1-t$.}
, with
$\vx_t=(1-t)\,\vx_0+t\,\pmb\epsilon$, $\pmb\epsilon\sim\mathcal N(0,I)$, and
$\vv_\text{fwd}=\pmb\epsilon-\vx_0$
.
We write $\vv_\theta$, $\vv_\text{old}$, and $\vv_\text{ref}$ for the velocities of
the current, lagged, and pretrained models, all evaluated at $(\vx_t,t)$, and omit
conditioning.
Rollouts are drawn as $\vx_0\sim p_{\theta_\text{old}}$, with
$\theta_\text{old}=\theta$ for on-policy methods, and timesteps are sampled from a distribution $p(t)$, i.e., $t\sim p(t)$.
As in the main text, $\beta$ is the coefficient of the anchor to $\vv_\text{ref}$;
we write $\gamma$ for the coefficient of the anchor to $\vv_\text{old}$.
The reward enters through a scalar weight $f(r(\vx_0),t)$.
For most methods $f=A(\vx_0)/\alpha$, where $A$ is the group-normalized advantage and $1/\alpha$ a reward coefficient; for ORW-CFM-W2,
$f=\exp(r(\vx_0)/\alpha)$.

\textbf{Unified gradient.}
Setting aside clipping and loss transformations, which we discuss for each method,
every method below updates $\theta$ along
\begin{align}
\label{eq:app-unified-rwr-gradient}
g(\theta)
&=
\mathbb E_{t,\vx_0,\pmb\epsilon}
\Big[
2(\nabla_\theta\vv_\theta)^\top
\Big(
\underbrace{\beta\,(\vv_\theta-\vv_\text{ref})}_{\text{ref.\ anchor}}
+
\underbrace{\gamma\,(\vv_\theta-\vv_\text{old})}_{\text{old anchor}}
+
\underbrace{f(r(\vx_0),t)\,(\vv_\bullet-\vv_\text{fwd})}_{\text{RWR}}
\Big)
\Big], \\
\nonumber \vv_\bullet&\in\{\vv_\theta,\vv_\text{old}\},
\end{align}
where the sampling distribution of $\vx_0$ is held fixed when differentiating.
Eq.~(\ref{eq:app-unified-rwr-gradient}) is exactly the gradient of the regression loss
\begin{equation}
\mathbb E_{t,\vx_0,\pmb\epsilon}
\Big\|
\vv_\theta
-
\sg\!\Big[
\vv_\theta
+\beta\,(\vv_\text{ref}-\vv_\theta)
+\gamma\,(\vv_\text{old}-\vv_\theta)
+f(r(\vx_0),t)\,(\vv_\text{fwd}-\vv_\bullet)
\Big]
\Big\|_2^2,
\label{eq:app-unified-rwr-loss}
\end{equation}
since the stop-gradient leaves only $2(\nabla_\theta\vv_\theta)^\top(\vv_\theta-\sg[\cdot])$.
For $\vv_\bullet=\vv_\theta$ it is also exactly the gradient of the sum of squares
$\mathbb E\bigl[f\,\|\vv_\theta-\vv_\text{fwd}\|_2^2
+\beta\|\vv_\theta-\vv_\text{ref}\|_2^2
+\gamma\|\vv_\theta-\vv_\text{old}\|_2^2\bigr]$,
in which the weight $f$ does not depend on $\theta$.

We call two objectives equivalent when their gradients agree up to a positive
constant factor, which can be absorbed into the learning rate.
A recurring case is several methods that multiply every term of their loss by the
same positive time weight $w(t)$.
We absorb this weight into the time distribution, replacing
$\mathbb E_{t\sim p}[w(t)\,h(t)]$ by $\mathbb E_{t\sim\tilde p}[h(t)]$, where
$\tilde p(t)=p(t)\,w(t)/Z$ and $Z=\mathbb E_{t\sim p}[w(t)]$.
Since $Z$ does not depend on $\theta$, the two are equivalent.
Finally, some reductions hold exactly only at $\theta=\theta_\text{old}$, i.e., for
on-policy updates; these are marked with an asterisk in Table~\ref{tab:online-rwr-unified}.

\begin{table}[!h]
\centering
\caption{Online RWR methods as instances of the unified gradient,
Eq.~(\ref{eq:app-unified-rwr-gradient}).
Methods marked with $^*$ reduce exactly only at $\theta=\theta_\text{old}$
(Appendix~\ref{app:rwr-clipping}); all other reductions are exact identities.
Time weights are relative to $t\sim\mathcal U(0,1)$: sampling $t\sim p(t)$ is equivalent
to uniform sampling with weight $p(t)$.}
\label{tab:online-rwr-unified}
\small
\begin{tabular}{lcccccc}
\toprule
Method & $\beta$ & $\gamma$ & $f$ & $\vv_\bullet$ & Time weight & Details \\
\midrule
ORW-CFM-W2 & their $\alpha$ & $0$ & $\exp(r/\alpha)$ & $\vv_\theta$ & $1$
  & App.~\ref{app:rwr-orw} \\
FPO$^*$ & $0$ & $0$ & $A$ & $\vv_\theta$ & $\tilde w_\text{FPO}(t)$
  & App.~\ref{app:rwr-fpo} \\
AWM$^{*,\dagger}$ & their $\beta$ & $0$ & $A$ & $\vv_\theta$ & $w_\text{AWM}(t)$
  & App.~\ref{app:rwr-awm} \\
RAM$^\S$ & $1$ & $0$ & $A/\alpha$ & $\vv_\text{old}$ & $2t$
  & App.~\ref{app:rwr-ram} \\
DiffusionNFT & $0^\ddagger$ & $1$ & $\tilde A/\zeta$ & $\vv_\text{old}$ & $1$
  & App.~\ref{app:rwr-nft} \\
AdvantageFlow & their $\lambda$ & $1.1$ or $1-\tilde A$ & $\tilde A$ & $\vv_\theta$
  & $t^2$ & App.~\ref{app:rwr-nft} \\
\midrule
Algorithm~\ref{alg:method} (ours) & $\beta$ & $0$
  & $(A+\lambda\,\bar r_{\mathcal R})/\alpha$ & $\vv_\text{old}$ & $2t$
  & Sec.~\ref{sec:method_online_rwr} \\
\bottomrule
\end{tabular}

\smallskip
\begin{minipage}{\linewidth}
\footnotesize
$^\dagger$Follows its Equations 13, 15, and 16; its main experiments anchor to an EMA of
the online model instead.\\
$^\S$Follows its official implementation; its Equation 17 has $\vv_\bullet=\vv_\theta$.\\
$^\ddagger$Its official implementation adds a small reference anchor, i.e., $\beta>0$.
\end{minipage}
\end{table}

\subsection{ORW-CFM-W2}
\label{app:rwr-orw}

ORW-CFM-W2~\citep{fan2025online} minimizes a reward-weighted flow-matching loss plus an anchor to the pretrained velocity~\citep[Equation 10]{fan2025online}.
Their reward weight $\exp(\tau\,r)$ and anchor coefficient $\alpha$ correspond to our
$f=\exp(r(\vx_0)/\alpha)$ and $\beta$, and rollouts are drawn from the current model,
i.e., $\theta_\text{old}=\theta$.
Their loss is therefore exactly the sum-of-squares form below
Eq.~(\ref{eq:app-unified-rwr-loss}), with $\gamma=0$ and $\vv_\bullet=\vv_\theta$
(Table~\ref{tab:online-rwr-unified}).

\subsection{FPO}
\label{app:rwr-fpo}
For each rollout $\vx_0\sim p_{\theta_\text{old}}$ with advantage $A=A(\vx_0)$, FPO
draws $N_\text{mc}$ pairs $(t_i,\pmb\epsilon_i)$ with $t_i\sim\mathcal U(0,1)$ and
evaluates the $\pmb\epsilon$-prediction loss
$\ell_\theta(t_i,\pmb\epsilon_i)
=w_\text{FPO}(t_i)\,\|\hat{\pmb\epsilon}_\theta(\vx_{t_i},t_i)-\pmb\epsilon_i\|_2^2$,
where $w_\text{FPO}(t)>0$ is the weighting
of~\citet[Equation 14]{mcallister2026flow}.
Rewriting the model's prediction as
$\hat{\pmb\epsilon}_\theta=\vx_t+(1-t)\,\vv_\theta$ gives
$\hat{\pmb\epsilon}_\theta-\pmb\epsilon=(1-t)(\vv_\theta-\vv_\text{fwd})$, and hence
\begin{equation}
\ell_\theta(t,\pmb\epsilon)
=
\tilde w_\text{FPO}(t)\,\|\vv_\theta-\vv_\text{fwd}\|_2^2,
\qquad
\tilde w_\text{FPO}(t)=w_\text{FPO}(t)\,(1-t)^2.
\label{eq:app-fpo-loss}
\end{equation}
The ratio and the clipped objective are~\citep[Algorithm 1, lines 9--10]{mcallister2026flow}
\begin{equation}
\hat\rho_\theta
=
\exp\Bigl(
-\frac{1}{N_\text{mc}}\sum_{i=1}^{N_\text{mc}}
\bigl[\ell_\theta(t_i,\pmb\epsilon_i)-\ell_{\theta_\text{old}}(t_i,\pmb\epsilon_i)\bigr]
\Bigr),
\qquad
\mathcal L_\text{FPO}(\theta)
=
-\min\bigl(\hat\rho_\theta A,\;\operatorname{clip}(\hat\rho_\theta,1-\xi,1+\xi)\,A\bigr),
\label{eq:app-fpo-ratio}
\end{equation}
where we negate FPO's objective to obtain a loss to be minimized.

At $\theta=\theta_\text{old}$, $\hat\rho_\theta=1$ lies strictly inside
$[1-\xi,1+\xi]$, so no rollout is clipped, and
$\nabla_\theta\hat\rho_\theta=\hat\rho_\theta\nabla_\theta\log\hat\rho_\theta
=\nabla_\theta\log\hat\rho_\theta$.
The gradient of Eq.~(\ref{eq:app-fpo-ratio}) is therefore
\begin{equation}
\nabla_\theta\mathcal L_\text{FPO}(\theta_\text{old})
=
-A\,\nabla_\theta\log\hat\rho_\theta
=
\frac{1}{N_\text{mc}}\sum_{i=1}^{N_\text{mc}}
2\,\tilde w_\text{FPO}(t_i)\,(\nabla_\theta\vv_\theta)^\top A\,(\vv_\theta-\vv_\text{fwd}),
\end{equation}
with the velocities in the $i$-th term evaluated at $(\vx_{t_i},t_i)$.
Averaging over $(t_i,\pmb\epsilon_i)$ and absorbing $\tilde w_\text{FPO}(t)$ into the time
distribution, this is exactly Eq.~(\ref{eq:app-unified-rwr-gradient}) with
$\beta=\gamma=0$, $f=A$, $\vv_\bullet=\vv_\theta$, and $p(t)\propto \tilde w_\text{FPO}(t)$.
For $\theta\neq\theta_\text{old}$, the importance ratio and clipping modify this
gradient (Appendix~\ref{app:rwr-clipping}).

\subsection{AWM}
\label{app:rwr-awm}

AWM~\citep{xue2025advantageweightedmatchingaligning} uses the same importance ratio
and clipping as FPO (Appendix~\ref{app:rwr-clipping}), but with a
velocity-prediction loss and an additional anchor to the pretrained model.
The regression loss entering the ratio
is~\citep[Equation 15]{xue2025advantageweightedmatchingaligning}
\begin{equation}
\ell_\theta(t,\pmb\epsilon)
=
w_\text{AWM}(t)\,\|\vv_\theta-\vv_\text{fwd}\|_2^2,
\label{eq:app-awm-loss}
\end{equation}
averaged over $N_\text{mc}$ pairs $(t_i,\pmb\epsilon_i)$, and the anchor is
$\beta\,w_\text{AWM}(t)\,\|\vv_\theta-\vv_\text{ref}\|_2^2$, whose coefficient coincides
with our $\beta$~\citep[Equations 13 and 16]{xue2025advantageweightedmatchingaligning}.
With PPO clipping, which the paper omits from its algorithm for simplicity but its
implementation includes, the full loss is
\begin{equation}
\mathcal L_\text{AWM}(\theta)
=
-\min\bigl(\hat\rho_\theta A,\;\operatorname{clip}(\hat\rho_\theta,1-\xi,1+\xi)\,A\bigr)
+
\frac{\beta}{N_\text{mc}}\sum_{i=1}^{N_\text{mc}}
w_\text{AWM}(t_i)\,\|\vv_\theta-\vv_\text{ref}\|_2^2,
\label{eq:app-awm-full}
\end{equation}
with $\hat\rho_\theta$ as in Eq.~(\ref{eq:app-clip-loss}) and $\bar\ell_\theta$ the
average of Eq.~(\ref{eq:app-awm-loss}).

As for FPO, at $\theta=\theta_\text{old}$ we have $\hat\rho_\theta=1$, no rollout is
clipped, and the ratio term contributes
$-A\,\nabla_\theta\log\hat\rho_\theta=A\,\nabla_\theta\bar\ell_\theta$.
Hence
\begin{equation}
\nabla_\theta\mathcal L_\text{AWM}(\theta_\text{old})
=
\frac{1}{N_\text{mc}}\sum_{i=1}^{N_\text{mc}}
2\,w_\text{AWM}(t_i)\,(\nabla_\theta\vv_\theta)^\top
\bigl[A\,(\vv_\theta-\vv_\text{fwd})+\beta\,(\vv_\theta-\vv_\text{ref})\bigr],
\end{equation}
with the velocities in the $i$-th term evaluated at $(\vx_{t_i},t_i)$.
The weight $w_\text{AWM}(t)$ multiplies both terms, so absorbing it into the time
distribution, this is exactly Eq.~(\ref{eq:app-unified-rwr-gradient}) with $\gamma=0$,
$f=A$, $\vv_\bullet=\vv_\theta$, and $p(t)\propto w_\text{AWM}(t)$.
For $\theta\neq\theta_\text{old}$, see Appendix~\ref{app:rwr-clipping}.

\textbf{Caveat.}
In its public implementation\footnote{\href{https://github.com/scxue/advantage_weighted_matching/blob/e20f94074c4e794d12dc953e58ee737d0efab786/advantage_weighted_matching/docs/shared_prompt_training_noise.md?plain=1\#L43}{AWM official implementation}}, AWM replaces the loss inside the ratio,
Eq.~(\ref{eq:app-awm-loss}), by a concave transform of the squared error, which it
calls a generalized Huber loss:
\begin{equation}
\ell_\theta(t,\pmb\epsilon)
=
\frac{t\,\Bigl[\bigl(\tfrac1D\|\vv_\theta-\vv_\text{fwd}\|_2^2+\delta\bigr)^{1/2}-\delta^{1/2}\Bigr]}{1/2},
\qquad
\delta=10^{-10},
\label{eq:app-awm-ghuber}
\end{equation}
where $D$ is the data dimension. This transform is not described in the paper.

\subsection{RAM}
\label{app:rwr-ram}

RAM~\citep{bergmeister2026reinforceadjointmatchingscaling} derives its objective from stochastic optimal control (SOC), but approximates the path-cost correction term by zero;
this approximation is what reduces the objective to reward-weighted regression.
Note that without the correction, the SOC formulation is poorly justified.
Therefore, we instead analyze the objective RAM optimizes in
practice~\citep[Equation 17]{bergmeister2026reinforceadjointmatchingscaling},
\begin{equation}
\mathcal L_\text{RAM}(\theta)
=
\mathbb E_{t\sim p(t),\,\vx_0\sim p_{\theta_\text{old}},\,\pmb\epsilon}
\Bigl\|
\vv_\theta
-
\sg\bigl[\vv_\text{ref}+A\,(\vv_\text{fwd}-\vv_\theta)\bigr]
\Bigr\|_2^2,
\qquad
p(t)=2t,
\label{eq:app-ram-paper}
\end{equation}
where we write $A$ for the group-normalized advantage that RAM denotes by $r$.
With $\beta=1$, the terms $\vv_\theta$ and $-\beta\,\vv_\theta$ inside the
stop-gradient of Eq.~(\ref{eq:app-unified-rwr-loss}) cancel, so
Eq.~(\ref{eq:app-ram-paper}) is exactly Eq.~(\ref{eq:app-unified-rwr-loss}) with
$\beta=1$, $\gamma=0$, $f=A$, and $\vv_\bullet=\vv_\theta$.

In its implementation, RAM makes two changes: it scales the advantage by a reward
coefficient $1/\alpha$,
and evaluates the velocity inside the RWR term at the lagged parameters, using
$\vv_\text{old}$ in place of $\vv_\theta$.
The implemented loss,
$\mathbb E\,\bigl\|\vv_\theta-\sg\bigl[\vv_\text{ref}+\tfrac{A}{\alpha}(\vv_\text{fwd}-\vv_\text{old})\bigr]\bigr\|_2^2$,
is therefore Eq.~(\ref{eq:app-unified-rwr-loss}) with $\beta=1$, $\gamma=0$,
$f=A/\alpha$, and $\vv_\bullet=\vv_\text{old}$.
Note that this is equivalent to Algorithm~\ref{alg:method} with $\beta=1$ and $\lambda=0$.

\subsection{DiffusionNFT \& AdvantageFlow}
\label{app:rwr-nft}

\textbf{DiffusionNFT.} With clipped advantage $\tilde A=\operatorname{clip}(A,-1,1)$, the loss is
\begin{equation}
\mathcal L_\text{NFT}(\theta)
=
\mathbb E_{t,\vx_0\sim p_{\theta_\text{old}},\pmb\epsilon}
\Bigl[
\tfrac{1+\tilde A}{2}\,\bigl\|(1-\zeta)\,\vv_\text{old}+\zeta\,\vv_\theta-\vv_\text{fwd}\bigr\|_2^2
+
\tfrac{1-\tilde A}{2}\,\bigl\|(1+\zeta)\,\vv_\text{old}-\zeta\,\vv_\theta-\vv_\text{fwd}\bigr\|_2^2
\Bigr],
\label{eq:app-nft-loss}
\end{equation}
where we have rewritten their $\beta$ with $\zeta ( \in \{1, 0.1\})$~\citep[Equation 5]{zheng2025diffusionnft}.
\citet[Appendix H.2.1]{lee2026rewardscorematchingunifying} shows that
Eq.~(\ref{eq:app-nft-loss}) is exactly $\zeta^2$ times
Eq.~(\ref{eq:app-unified-rwr-loss}) with $\beta=0$, $\gamma=1$,
$f=\tilde A/\zeta$, and $\vv_\bullet=\vv_\text{old}$, up to a $\theta$-independent
constant.
Their public implementation additionally adds a small anchor to the pretrained
velocity, $\beta_\text{NFT}\|\vv_\theta-\vv_\text{ref}\|_2^2$ with default
$\beta_\text{NFT}=10^{-4}$, which is not part of the paper; it corresponds to some
$\beta>0$ in Eq.~(\ref{eq:app-unified-rwr-loss}), whose exact value depends on the
implementation's loss scaling and on the adaptive weighting discussed below\footnote{\href{https://github.com/NVlabs/DiffusionNFT/blob/cbb14f84b8312b620390dfcbe2fab69c1383b104/scripts/train_nft_sd3.py\#L966}{DiffusionNFT official implementation}}.

\textbf{AdvantageFlow}~\citep[Equation 5]{kveton2026advantageflowadvantageweightedsquaresrl}
states its objective in terms of the clean-sample prediction
$\hat\vx_0=\vx_t-t\,\vv$.
Since $\hat\vx_0-\vx_0=t\,(\vv-\vv_\text{fwd})$, every term carries a factor $t^2$,
which we absorb into the time distribution $p(t)$.
The resulting objective is equivalent to Eq.~(\ref{eq:app-unified-rwr-loss}) with
$f=\tilde A$ and $\vv_\bullet=\vv_\theta$.
AdvantageFlow sets $\gamma$ so that the total coefficient $\tilde A+\gamma+\beta$ is
positive for every sample, e.g., $\gamma=1.1$ or $\gamma=1-\tilde A$, which makes each
per-sample loss strictly convex in the
prediction~\citep[Section 3.3]{kveton2026advantageflowadvantageweightedsquaresrl}.

\textbf{Caveat.}
DiffusionNFT replaces the time weighting of the flow-matching loss by an adaptive
self-normalization, which it motivates as an alternative to manual tuning of the
weighting~\citep[page 6]{zheng2025diffusionnft}.
Each of the two squared errors in Eq.~(\ref{eq:app-nft-loss}) is divided by the
stop-gradient of its mean absolute $\vx_0$-prediction error.
Since $\hat\vx_0=\vx_t-t\,\vv$ gives $\hat\vx_0-\vx_0=t\,(\vv-\vv_\text{fwd})$, the
first and second terms acquire the weights
\begin{equation}
\omega_1
=
\frac{t}{\sg\bigl(\operatorname{mean}\bigl|(1-\zeta)\,\vv_\text{old}+\zeta\,\vv_\theta-\vv_\text{fwd}\bigr|\bigr)},
\qquad
\omega_2
=
\frac{t}{\sg\bigl(\operatorname{mean}\bigl|(1+\zeta)\,\vv_\text{old}-\zeta\,\vv_\theta-\vv_\text{fwd}\bigr|\bigr)}.
\end{equation}
At $\theta=\theta_\text{old}$, both residuals equal $\vv_\text{old}-\vv_\text{fwd}$, so
$\omega_1=\omega_2$ and the gradient is that of Eq.~(\ref{eq:app-nft-loss})
multiplied by a per-sample factor.
Away from $\theta_\text{old}$, however, $\omega_1\neq\omega_2$, and the two terms no
longer combine into Eq.~(\ref{eq:app-unified-rwr-loss}).
Differentiating the weighted terms directly, the coefficient of the RWR direction
$\vv_\text{old}-\vv_\text{fwd}$ becomes
\begin{equation}
\tilde f
=
\tfrac{\zeta}{2}\bigl[(1+\tilde A)\,\omega_1-(1-\tilde A)\,\omega_2\bigr],
\end{equation}
which is no longer proportional to $\tilde A$: a rollout with $\tilde A=0$ still
receives the weight $\zeta(\omega_1-\omega_2)/2$.
The adaptive weighting therefore changes the scale \emph{and update direction}.

\subsection{Importance Sampling ratios and clipping}
\label{app:rwr-clipping}

FPO and AWM weight each rollout's regression loss by an estimated importance ratio
and clip it as in PPO~\citep{schulman2017proximalpolicyoptimizationalgorithms}.
We describe the effect for a rollout $\vx_0$ with advantage $A$.
Let $\bar\ell_\theta$ be its regression loss averaged over $N_\text{mc}$ pairs
$(t_i,\pmb\epsilon_i)$, e.g., the average of Eq.~(\ref{eq:app-fpo-loss}), and
\begin{equation}
\hat\rho_\theta=\exp\bigl(-(\bar\ell_\theta-\bar\ell_{\theta_\text{old}})\bigr),
\qquad
\mathcal L(\theta)
=
-\min\bigl(\hat\rho_\theta A,\;\operatorname{clip}(\hat\rho_\theta,1-\xi,1+\xi)\,A\bigr).
\label{eq:app-clip-loss}
\end{equation}
Since $\bar\ell_\theta$ is only an ELBO surrogate for $-\log p_\theta(\vx_0)$, up to
a $\theta$-independent constant, $\hat\rho_\theta$ estimates the likelihood ratio
$p_\theta(\vx_0)/p_{\theta_\text{old}}(\vx_0)$ rather than equaling it.

\textbf{Clipping.}
If $A>0$ and $\hat\rho_\theta>1+\xi$, or $A<0$ and $\hat\rho_\theta<1-\xi$, the minimum
in Eq.~(\ref{eq:app-clip-loss}) selects the clipped branch, which is constant in
$\theta$, and the rollout contributes no gradient.
In the other two cases the minimum selects $\hat\rho_\theta A$.
Clipping thus stops updating a rollout once its ratio has moved past $1\pm\xi$ in the
direction favored by $A$, a trust region around $\theta_\text{old}$.

\textbf{Importance sampling ratio.}
When the rollout is not clipped, using
$\nabla_\theta\hat\rho_\theta=\hat\rho_\theta\nabla_\theta\log\hat\rho_\theta$,
\begin{equation}
\nabla_\theta\mathcal L(\theta)
=
-A\,\hat\rho_\theta\,\nabla_\theta\log\hat\rho_\theta
=
\hat\rho_\theta\,A\,\nabla_\theta\bar\ell_\theta,
\label{eq:app-clip-grad}
\end{equation}
i.e., the RWR gradient $A\,\nabla_\theta\bar\ell_\theta$ multiplied by the
per-rollout factor $\hat\rho_\theta$.
At $\theta=\theta_\text{old}$, $\hat\rho_\theta=1$ and no rollout is clipped, which is
why the reductions of FPO and AWM hold exactly there.
Away from $\theta_\text{old}$, the step is reweighted by $\hat\rho_\theta$ and
truncated by clipping, and Eq.~(\ref{eq:app-unified-rwr-gradient}) captures neither.

\textbf{Log-ratio surrogate.}
\citet{huangppoclip} show that PPO clipping is a hinge loss on $\rho-1$, and that
$\rho-1$ can be replaced by other functions that change sign at $\rho=1$, such as
$\log\rho$; PCPO~\citep{lee2025pcpo} reports that $\widehat{\log \rho}$ is a lower-variance estimator than $\hat \rho$ for $\rho$ for diffusion models.
Replacing $\hat\rho_\theta$ by $\log\hat\rho_\theta+1$ in
Eq.~(\ref{eq:app-clip-loss}) turns the unclipped gradient into
$-A\,\nabla_\theta\log\hat\rho_\theta=A\,\nabla_\theta\bar\ell_\theta$ for every
$\theta$, removing the multiplicative factor $\hat\rho_\theta$.
Clipping $\log\hat\rho_\theta+1$ to $[1-\xi,1+\xi]$ is the same as clipping
$\log\hat\rho_\theta$ to $[-\xi,\xi]$, so the trust region remains.

\section{Distributional rewards: proofs and details}
\label{app:connections-and-special-cases}
Throughout this appendix, we interchangeably use $q\equiv p_\text{ref}$ for brevity.

\subsection{Informal derivation of the first-order condition}
\label{sec:loose-derivation}

We derive Eq.~(\ref{eq:general_functional_gibbs}) formally, assuming $p^\star$ has a
density that is positive wherever $p_\text{ref}$ is, and that the functional
derivative of Eq.~(\ref{eq:functional_derivative}) exists; \citet[Proposition~2.5]{hu2021meanfield}
give a rigorous version under conditions on $\mathcal F$ and $p_\text{ref}$.

Write $\mathcal V(p)=\mathcal F(p)+\beta\,\mathcal D_\text{KL}(p\Vert p_\text{ref})$.
The functional derivative of the KL term is
\begin{equation}
\frac{\delta}{\delta p}\,\mathcal D_\text{KL}(p\Vert p_\text{ref})(\vx)
=
\log\frac{p(\vx)}{p_\text{ref}(\vx)}+1,
\label{eq:kl-functional-derivative}
\end{equation}
since, writing the KL divergence as $\int p\log(p/p_\text{ref})\,\mathrm d\vx$ and
differentiating under the integral,
\begin{align}
\frac{\mathrm d}{\mathrm d\varepsilon}
\mathcal D_\text{KL}(p+\varepsilon\eta\Vert p_\text{ref})\Big|_{\varepsilon=0}
&=
\frac{\mathrm d}{\mathrm d\varepsilon}
\int \bigl(p(\vx)+\varepsilon\eta(\vx)\bigr)
\log\frac{p(\vx)+\varepsilon\eta(\vx)}{p_\text{ref}(\vx)}\,\mathrm d\vx\,\Big|_{\varepsilon=0}
\\
&=
\int \frac{\partial}{\partial u}\Bigl[u\log\frac{u}{p_\text{ref}(\vx)}\Bigr]_{u=p(\vx)}
\cdot
\frac{\mathrm d}{\mathrm d\varepsilon}\bigl(p(\vx)+\varepsilon\eta(\vx)\bigr)
\,\mathrm d\vx
\\
&=
\int\Bigl(\log\frac{p(\vx)}{p_\text{ref}(\vx)}+1\Bigr)\,\eta(\vx)\,\mathrm d\vx.
\end{align}
If $p^\star$ minimizes $\mathcal V$, then for every zero-mass $\eta$ such that
$p^\star+\varepsilon\eta$ remains a probability measure for small $\varepsilon$ of
either sign,
\begin{equation}
0
=
\frac{\mathrm d}{\mathrm d\varepsilon}\mathcal V(p^\star+\varepsilon\eta)\Big|_{\varepsilon=0}
=
\int\left[
\frac{\delta\mathcal F}{\delta p}(p^\star,\vx)
+\beta\log\frac{p^\star(\vx)}{p_\text{ref}(\vx)}
\right]\eta(\mathrm d\vx),
\end{equation}
where the $+1$ of Eq.~(\ref{eq:kl-functional-derivative}) has dropped out because
$\eta(\mathcal X)=0$.
A function whose integral against every zero-mass $\eta$ vanishes is constant, so
\begin{equation}
\frac{\delta\mathcal F}{\delta p}(p^\star,\vx)
+\beta\log\frac{p^\star(\vx)}{p_\text{ref}(\vx)}
= c
\qquad\text{for all }\vx,
\end{equation}
and solving for $p^\star$ gives
$p^\star(\vx)\propto p_\text{ref}(\vx)\exp\bigl(-\tfrac1\beta\frac{\delta\mathcal F}{\delta p}(p^\star,\vx)\bigr)$,
with the constant $c$ absorbed into $Z(p^\star)$.
Convexity of $\mathcal F$ and strict convexity of the KL term make $\mathcal V$
strictly convex; therefore, the stationary point is the unique minimizer.

\subsection{Backward consistency.}
\label{app:backward-consistency}

By the chain rule for relative entropy,
\begin{equation}
\mathcal D_\mathrm{KL}
\bigl(\mathbb P\Vert\mathbb P_{\mathrm{ref}}\bigr)
=
\mathcal D_\mathrm{KL}
\bigl(\mathbb P_0\Vert p_{\mathrm{ref}}\bigr)
+
\mathbb E_{x_0\sim\mathbb P_0}
\left[
\mathcal D_\mathrm{KL}\!\left(
\mathbb P(\mathrm d \vx_{1:T}\mid \vx_0)
\Vert
\mathbb P_{\mathrm{ref}}(\mathrm d \vx_{1:T}\mid \vx_0)
\right)
\right].
\label{eq:path_kl_decomposition}
\end{equation}
Because \(\mathcal F\) depends only on the clean-endpoint marginal, the
conditional KL is minimized by
\begin{equation}
\mathbb P^\star(\mathrm d \vx_{1:T}\mid \vx_0)
=
\mathbb P_{\mathrm{ref}}(\mathrm d \vx_{1:T}\mid \vx_0).
\label{eq:posterior_preservation}
\end{equation}
\looseness=-1
The optimal process therefore changes the distribution of clean samples, while preserving the pretrained conditional noising process. Equivalently, its backward kernels coincide with those of the pretrained model. This posterior-preservation property was established by
\citet[Theorem~3]{uehara2024understandingreinforcementlearningbasedfinetuning} and underlies forward-process fine-tuning methods like DiffusionNFT
\citep{zheng2025diffusionnft}.

\subsection{Energy distance as an MMD}
\label{app:energy-mmd}
Following the energy score construction of
\citet{debortoli2025distributional},
we construct the kernel behind $\mathcal R_{\mathrm{MMD}}$ and show that its
squared MMD equals Eq.~(\ref{eq:energy-mmd-functional}).
Let $\psi:\mathcal X\to\mathbb R^m$ be a fixed injective map and $b\in(0,2)$, and define
\begin{equation}
\rho_b(\vx,\vy)=\|\psi(\vx)-\psi(\vy)\|_2^b.
\label{eq:energy-semimetric2}
\end{equation}
Consider distributions with a finite $b$-moment,
$\mathbb E_{\vx\sim p}\|\psi(\vx)\|_2^b<\infty$,
so that every expectation of $\rho_b$ is finite.
The function \(\rho_b\) is of negative type~\citep[Proposition~3]{sejdinovic2013distancerkhs} and induces the positive definite distance-induced kernel
\begin{equation}
k_b(\vx,\vy)
=
\frac{1}{2}
\left[
\rho_b(\vx,\vx_{\mathrm{a}})
+
\rho_b(\vy,\vx_{\mathrm{a}})
-
\rho_b(\vx,\vy)
\right],
\label{eq:energy-kernel}
\end{equation}
where \(\vx_{\mathrm{a}}\in\mathcal X\) is an arbitrary anchor~\citep[Section~4.1]{sejdinovic2013distancerkhs}.
The
squared MMD does not depend on the choice of \(\vx_{\mathrm{a}}\) and equals half of the energy distance~\citep[Theorem~22]{sejdinovic2013distancerkhs}, given as Eq.~(\ref{eq:energy-mmd-functional}).

\subsection{Convexity of the distributional functionals}
\label{app:convexity}

Section~\ref{sec:distributional_regularization} requires $\mathcal R$ to be convex
along mixtures: for $p_t=(1-t)p_0+tp_1$ and $t\in[0,1]$,
$\mathcal R(p_t)\le(1-t)\mathcal R(p_0)+t\mathcal R(p_1)$.
We verify this for the three instances of
Section~\ref{subsec:special-cases-of-ours}.

\textbf{MMD.}
Let $\nu=p-q$. By Eq.~(\ref{eq:mmd}),
$\mathcal R_{\mathrm{MMD}}(p)=Q(\nu):=\iint k_b(\vx,\vy)\,\nu(\mathrm d\vx)\,\nu(\mathrm d\vy)$,
a quadratic form in the signed measure $\nu$. Because $k_b$ is positive definite,
$Q(\nu)\ge 0$ for every signed measure $\nu$ with finite $b$-moment. Since $p\mapsto\nu$
is affine, $\nu_t=(1-t)\nu_0+t\nu_1$, and expanding the quadratic form,
\begin{equation}
Q(\nu_t)=(1-t)Q(\nu_0)+tQ(\nu_1)-t(1-t)\,Q(\nu_0-\nu_1)
\le(1-t)Q(\nu_0)+tQ(\nu_1).
\end{equation}

\textbf{DreamSim diversity.}
Let $m(p)=\mathbb E_p[z]\in\mathbb R^d$, so that
$\mathcal R_{\mathrm{DS}}(p)=\tfrac12\|m(p)\|_2^2$.
Since $m(p_t)=(1-t)m(p_0)+t\,m(p_1)$, and for any $m_0,m_1$,
\begin{align}
\tfrac12\|(1-t)m_0+t\,m_1\|_2^2
&=(1-t)\tfrac12\|m_0\|_2^2+t\,\tfrac12\|m_1\|_2^2-t(1-t)\tfrac12\|m_0-m_1\|_2^2 \\
&\le(1-t)\tfrac12\|m_0\|_2^2+t\,\tfrac12\|m_1\|_2^2,
\end{align}
$\mathcal R_{\mathrm{DS}}$ is convex.

\textbf{FD.}
Expanding the two $p$-dependent terms of Eq.~(\ref{eq:fd}) separately,
\begin{align}
\|\mu_p-\mu_q\|_2^2
&=\|\mu_p\|_2^2-2\mu_p^\top\mu_q+\|\mu_q\|_2^2,
\\
\operatorname{Tr}\Sigma_p
&=\operatorname{Tr}\bigl(\mathbb E_p[ff^\top]\bigr)-\operatorname{Tr}\bigl(\mu_p\mu_p^\top\bigr)
=\mathbb E_p\|f\|_2^2-\|\mu_p\|_2^2,
\end{align}
where the last equality uses $\operatorname{Tr}(vv^\top)=\|v\|_2^2$.
Adding the two, the $\|\mu_p\|_2^2$ terms cancel:
\begin{equation}
\|\mu_p-\mu_q\|_2^2+\operatorname{Tr}\Sigma_p
=\mathbb E_p\|f\|_2^2-2\mu_p^\top\mu_q+\|\mu_q\|_2^2.
\end{equation}
Both $\mathbb E_p\|f\|_2^2$ and $\mu_p=\mathbb E_p[f]$ are expectations under $p$
and hence linear in $p$, so this sum is affine in $p$. Substituting into
Eq.~(\ref{eq:fd}) and writing
$g(A):=\operatorname{Tr}\bigl(\Sigma_q^{1/2}A\,\Sigma_q^{1/2}\bigr)^{1/2}$,
\begin{equation}
\mathcal R_{\mathrm{FD}}(p)
=
\underbrace{\mathbb E_p\|f\|_2^2-2\mu_p^\top\mu_q+\|\mu_q\|_2^2+\operatorname{Tr}\Sigma_q}_{\text{affine in }p}
\;-\;2\,g(\Sigma_p).
\end{equation}
It remains to show that $p\mapsto g(\Sigma_p)$ is concave.

First, we show that $p\mapsto\Sigma_p$ is concave in the Loewner order, i.e.,
$\Sigma_{p_t}\succeq(1-t)\Sigma_{p_0}+t\,\Sigma_{p_1}$.
To see this, write $M_i=\mathbb E_{p_i}[ff^\top]$ and $\mu_i=\mu_{p_i}$ for
$i\in\{0,1\}$, so that $\Sigma_{p_i}=M_i-\mu_i\mu_i^\top$.
Expectations are linear in $p$, so under the mixture
\begin{equation}
\mathbb E_{p_t}[ff^\top]=(1-t)M_0+t\,M_1,
\qquad
\mu_{p_t}=(1-t)\mu_0+t\,\mu_1,
\end{equation}
and therefore
\begin{equation}
\Sigma_{p_t}
=(1-t)M_0+t\,M_1-\mu_{p_t}\mu_{p_t}^\top.
\end{equation}
The mixture of the covariances is
\begin{equation}
(1-t)\Sigma_{p_0}+t\,\Sigma_{p_1}
=(1-t)M_0+t\,M_1-(1-t)\mu_0\mu_0^\top-t\,\mu_1\mu_1^\top.
\end{equation}
The $M$ terms coincide, so the difference is
\begin{equation}
\Sigma_{p_t}-\bigl[(1-t)\Sigma_{p_0}+t\,\Sigma_{p_1}\bigr]
=(1-t)\mu_0\mu_0^\top+t\,\mu_1\mu_1^\top-\mu_{p_t}\mu_{p_t}^\top.
\label{eq:cov-defect}
\end{equation}
Expanding the last term,
\begin{equation}
\mu_{p_t}\mu_{p_t}^\top
=(1-t)^2\mu_0\mu_0^\top+t^2\mu_1\mu_1^\top
+t(1-t)\bigl(\mu_0\mu_1^\top+\mu_1\mu_0^\top\bigr),
\end{equation}
and substituting into Eq.~(\ref{eq:cov-defect}) with
$(1-t)-(1-t)^2=t(1-t)$ and $t-t^2=t(1-t)$,
\begin{align}
\Sigma_{p_t}-\bigl[(1-t)\Sigma_{p_0}+t\,\Sigma_{p_1}\bigr]
&=t(1-t)\mu_0\mu_0^\top+t(1-t)\mu_1\mu_1^\top
-t(1-t)\bigl(\mu_0\mu_1^\top+\mu_1\mu_0^\top\bigr)
\\
&=t(1-t)(\mu_0-\mu_1)(\mu_0-\mu_1)^\top.
\label{eq:cov-loewner-concave}
\end{align}
Since $t(1-t)\ge 0$ and $vv^\top$ is positive semidefinite for any vector $v$, the right-hand side is positive
semidefinite.

Second, $g$ is nondecreasing and concave on positive semidefinite matrices:
$A\mapsto\Sigma_q^{1/2}A\Sigma_q^{1/2}$ is linear and order preserving, the matrix
square root is operator monotone and operator concave~\citep[Theorem~V.1.9]{bhatia1996matrix},
and the trace is linear and order preserving. Hence
\begin{equation}
g(\Sigma_{p_t})
\ge g\bigl((1-t)\Sigma_{p_0}+t\Sigma_{p_1}\bigr)
\ge(1-t)\,g(\Sigma_{p_0})+t\,g(\Sigma_{p_1}),
\end{equation}
using monotonicity with Eq.~(\ref{eq:cov-loewner-concave}) and then concavity.
Thus $-2g(\Sigma_p)$ is convex, and $\mathcal R_{\mathrm{FD}}$ is convex.

\subsection{MMD reward}
\label{app:mmd-reward}

\MMDEffectiveReward*

We use $q \equiv p_\text{ref}$. Also, assume $p, q$ have finite $b$-moments, so
that every expectation is finite
\footnote{For experiments in this paper, latent values are bounded and thus satisfy this mild assumption.}.

Name the three terms of Eq.~(\ref{eq:energy-mmd-functional}):
\begin{align}
T_1(p)&=\iint\rho(\vx,\vy)\,p(\mathrm d\vx)\,q(\mathrm d\vy), \\
T_2(p)&=\iint\rho(\vx,\vx')\,p(\mathrm d\vx)\,p(\mathrm d\vx'), \\
T_3&=\iint\rho(\vy,\vy')\,q(\mathrm d\vy)\,q(\mathrm d\vy'),
\end{align}
so that $\mathcal R_{\mathrm{MMD}}(p)=T_1(p)-\tfrac12T_2(p)-\tfrac12T_3$, with $T_3$
independent of $p$. Let $\eta$ be a signed measure with $\eta(\mathcal X)=0$ and
replace $p$ by $p+\varepsilon\eta$.

$T_1$ is linear in $p$:
\begin{equation}
T_1(p+\varepsilon\eta)
=T_1(p)+\varepsilon\int\Bigl[\int\rho(\vx,\vy)\,q(\mathrm d\vy)\Bigr]\eta(\mathrm d\vx)
=T_1(p)+\varepsilon\int\mathbb E_{\vy\sim q}\,\rho(\vx,\vy)\;\eta(\mathrm d\vx).
\end{equation}

$T_2$ is quadratic in $p$. Expanding the product $(p+\varepsilon\eta)(p+\varepsilon\eta)$,
\begin{align}
T_2(p+\varepsilon\eta)
&=T_2(p)
+\varepsilon\iint\rho(\vx,\vx')\,\eta(\mathrm d\vx)\,p(\mathrm d\vx')
+\varepsilon\iint\rho(\vx,\vx')\,p(\mathrm d\vx)\,\eta(\mathrm d\vx')
\nonumber\\
&\qquad
+\varepsilon^2\iint\rho(\vx,\vx')\,\eta(\mathrm d\vx)\,\eta(\mathrm d\vx')
\\
&=T_2(p)
+2\varepsilon\int\Bigl[\int\rho(\vx,\vx')\,p(\mathrm d\vx')\Bigr]\eta(\mathrm d\vx)
+O(\varepsilon^2)
\\
&=T_2(p)
+2\varepsilon\int\mathbb E_{\vx'\sim p}\,\rho(\vx,\vx')\;\eta(\mathrm d\vx)
+O(\varepsilon^2),
\end{align}
where we use the symmetry $\rho(\vx,\vx')=\rho(\vx',\vx)$.

Combining, and dividing by $\varepsilon$,
\begin{equation}
\lim_{\varepsilon\to 0}
\frac{\mathcal R_{\mathrm{MMD}}(p+\varepsilon\eta)-\mathcal R_{\mathrm{MMD}}(p)}{\varepsilon}
=\int\Bigl[
\mathbb E_{\vy\sim q}\,\rho(\vx,\vy)
-\mathbb E_{\vx'\sim p}\,\rho(\vx,\vx')
\Bigr]\eta(\mathrm d\vx).
\end{equation}
Comparing with Eq.~(\ref{eq:functional_derivative}), a representative of the
functional derivative is
\begin{equation}
\frac{\delta\mathcal R_{\mathrm{MMD}}}{\delta p}(p,\vx)
=\mathbb E_{\vy\sim q}\,[\rho_b(\vx,\vy)]-\mathbb E_{\vx'\sim p}\,[\rho_b(\vx,\vx')],
\end{equation}
and its negative is Eq.~(\ref{eq:mmd-effective-reward}).

\subsection{FD Reward}
\label{app:fd-reward}

\subsubsection{Proof of Proposition~\ref{prop:fd-effective-reward}}
\label{app:fd-reward-proof}

\FDEffectiveReward*

\begin{proof}
Write $f=\phi(\vx)$ and recall $\mathcal R_{\mathrm{FD}}(p)=\FD(p,p_\text{ref})$ from Eq.~(\ref{eq:fd}), with $\mu_p,\Sigma_p$ the feature mean and covariance under $p$.
$\mathcal R_{\mathrm{FD}}$ depends on $p$ only through its feature mean
and covariance. We first differentiate the functional with respect to these two
statistics, and then apply the functional chain rule with respect to $p$.

\textbf{Derivatives with respect to the moments.}
The derivative with respect to the mean is:
\begin{equation}
\frac{\partial \mathcal R_{\mathrm{FD}}}{\partial \mu_p}
=
2(\mu_p-\mu_q).
\label{eq:app-fd-mean-derivative}
\end{equation}

For the covariance derivative, define
\begin{equation}
S
=
\Sigma_q^{1/2}\Sigma_p\Sigma_q^{1/2}.
\end{equation}
Under the positive-definiteness assumption of
Proposition~\ref{prop:fd-effective-reward}, the standard matrix differential
identity
\begin{equation}
d\,\operatorname{Tr}(S^{1/2})
=
\frac{1}{2}
\operatorname{Tr}\!\left(S^{-1/2}dS\right)
\end{equation}
gives, since
$dS=\Sigma_q^{1/2}(d\Sigma_p)\Sigma_q^{1/2}$,
\begin{align}
d\,\operatorname{Tr}(S^{1/2})
&=
\frac{1}{2}
\operatorname{Tr}\!\left(
S^{-1/2}
\Sigma_q^{1/2}
(d\Sigma_p)
\Sigma_q^{1/2}
\right)
\\
&=
\frac{1}{2}
\operatorname{Tr}\!\left(
\Sigma_q^{1/2}
S^{-1/2}
\Sigma_q^{1/2}
\,d\Sigma_p
\right).
\end{align}

The covariance-dependent part of
Eq.~(\ref{eq:fd}) is
$\operatorname{Tr}(\Sigma_p)-2\operatorname{Tr}(S^{1/2})$.
The first term contributes $I$, while the calculation above shows that the
second contributes
$-\Sigma_q^{1/2}S^{-1/2}\Sigma_q^{1/2}$.
Hence
\begin{equation}
\frac{\partial \mathcal R_{\mathrm{FD}}}{\partial \Sigma_p}
=
I
-
\Sigma_q^{1/2}
\left(
\Sigma_q^{1/2}\Sigma_p\Sigma_q^{1/2}
\right)^{-1/2}
\Sigma_q^{1/2}
=:G_p.
\label{eq:app-fd-G}
\end{equation}

\textbf{Functional derivative with respect to $p$.}
Consider a signed perturbation $\eta$ of $p$ satisfying
$\eta(\mathcal X)=0$. The induced first-order change in the mean is
\begin{equation}
d\mu_p[\eta]
=
\int f(\vx)\,\eta(d\vx).
\label{eq:app-fd-mean-variation}
\end{equation}
For the covariance, using
$\Sigma_p=\mathbb E_p[ff^\top]-\mu_p\mu_p^\top$ gives
\begin{align}
d\Sigma_p[\eta]
&=
\int f(\vx)f(\vx)^\top\,\eta(d\vx)
-
d\mu_p[\eta]\,\mu_p^\top
-
\mu_p\,d\mu_p[\eta]^\top
\\
&=
\int
\left[
f(\vx)f(\vx)^\top
-
f(\vx)\mu_p^\top
-
\mu_p f(\vx)^\top
\right]
\eta(d\vx).
\end{align}
Since $\eta(\mathcal X)=0$, we may add the $x$-independent matrix
$\mu_p\mu_p^\top$ inside the integral without changing its value. Therefore,
\begin{equation}
d\Sigma_p[\eta]
=
\int
(f(\vx)-\mu_p)(f(\vx)-\mu_p)^\top
\,\eta(d\vx).
\label{eq:app-fd-cov-variation}
\end{equation}

Applying the chain rule with
Eqs.~(\ref{eq:app-fd-mean-derivative}),
(\ref{eq:app-fd-G}),
(\ref{eq:app-fd-mean-variation}), and
(\ref{eq:app-fd-cov-variation}),
\begin{align}
d\mathcal R_{\mathrm{FD}}(p,q)[\eta]
&=
2(\mu_p-\mu_q)^\top d\mu_p[\eta]
+
\left\langle
G_p,
d\Sigma_p[\eta]
\right\rangle_F
\\
&=
\int
\Big[
2(\mu_p-\mu_q)^\top f(\vx)
+
\left\langle
G_p,
(f(\vx)-\mu_p)(f(\vx)-\mu_p)^\top
\right\rangle_F
\Big]
\eta(d\vx)
\\
&=
\int
\Big[
2(\mu_p-\mu_q)^\top f(\vx)
+
(f(\vx)-\mu_p)^\top G_p(f(\vx)-\mu_p)
\Big]
\eta(d\vx),
\end{align}
where the last equality uses
$\langle G,zz^\top\rangle_F=z^\top Gz$.

Reading off the integrand, a representative of $\delta\mathcal R_{\mathrm{FD}}/\delta p$ is the bracket, and its negative is Eq.~(\ref{eq:fd-effective-reward}).
\end{proof}

\subsubsection{Connection to Leave-One-Out FD}
\label{app:fd-loo}

We next relate the FD reward to a leave-one-out (LOO), or jackknife,
construction. Consider the empirical distribution $p_N$ of $N$ samples with
features $\{f_i\}_{i=1}^N$, mean
\begin{equation}
\mu
=
\frac{1}{N}\sum_{i=1}^N f_i,
\end{equation}
and covariance
\begin{equation}
\Sigma
=
\frac{1}{N}\sum_{i=1}^N
(f_i-\mu)(f_i-\mu)^\top.
\end{equation}
Let $p_{N\setminus i}$ denote the empirical distribution obtained after
removing sample $i$, and define the centered feature
\begin{equation}
z_i := f_i-\mu.
\end{equation}

The leave-one-out mean is
\begin{equation}
\mu_{\setminus i}
=
\frac{N\mu-f_i}{N-1}
=
\mu-\frac{z_i}{N-1},
\end{equation}
and therefore
\begin{equation}
\mu_{\setminus i}-\mu
=
-\frac{z_i}{N-1}.
\label{eq:fd-loo-mean-change}
\end{equation}

The corresponding leave-one-out covariance can be written exactly as
\begin{equation}
\Sigma_{\setminus i}
=
\frac{N}{N-1}\Sigma
-
\frac{N}{(N-1)^2}z_i z_i^\top.
\end{equation}
Hence
\begin{equation}
\Sigma_{\setminus i}-\Sigma
=
\frac{1}{N-1}\Sigma
-
\frac{N}{(N-1)^2}z_i z_i^\top.
\label{eq:fd-loo-cov-change}
\end{equation}

We now expand FD to first order around the moments $(\mu,\Sigma)$ of the
full empirical population. Viewing FD as a function of its mean and covariance,
\begin{align}
\mathcal R_{\mathrm{FD}}(p_{N\setminus i},q)
-
\mathcal R_{\mathrm{FD}}(p_N,q)
&\approx
\left(
\frac{\partial \mathcal R_{\mathrm{FD}}}{\partial \mu}
\right)^\top
(\mu_{\setminus i}-\mu)
+
\left\langle
\frac{\partial \mathcal R_{\mathrm{FD}}}{\partial \Sigma},
\Sigma_{\setminus i}-\Sigma
\right\rangle_F.
\label{eq:fd-loo-taylor-general}
\end{align}

For the mean contribution, Eqs.~(\ref{eq:app-fd-mean-derivative}) and~(\ref{eq:fd-loo-mean-change}) give
\begin{align}
\left(
\frac{\partial \mathcal R_{\mathrm{FD}}}{\partial \mu}
\right)^\top
(\mu_{\setminus i}-\mu)
&=
2(\mu-\mu_q)^\top
\left(
-\frac{z_i}{N-1}
\right)
\\
&=
-\frac{2}{N-1}
(\mu-\mu_q)^\top z_i.
\label{eq:fd-loo-mean-contribution}
\end{align}

For the covariance contribution, Eqs.~(\ref{eq:app-fd-G}) and~(\ref{eq:fd-loo-cov-change}) give
\begin{align}
\left\langle
\frac{\partial \mathcal R_{\mathrm{FD}}}{\partial \Sigma},
\Sigma_{\setminus i}-\Sigma
\right\rangle_F
&=
\left\langle
G,
\frac{1}{N-1}\Sigma
-
\frac{N}{(N-1)^2}z_i z_i^\top
\right\rangle_F
\\
&=
\frac{1}{N-1}
\langle G,\Sigma\rangle_F
-
\frac{N}{(N-1)^2}
\langle G,z_i z_i^\top\rangle_F
\\
&=
\frac{1}{N-1}
\langle G,\Sigma\rangle_F
-
\frac{N}{(N-1)^2}
z_i^\top Gz_i,
\label{eq:fd-loo-cov-contribution}
\end{align}
where we used
$\langle G,zz^\top\rangle_F=z^\top Gz$.

Combining Eqs.~(\ref{eq:fd-loo-mean-contribution}) and
(\ref{eq:fd-loo-cov-contribution}) yields
\begin{align}
\mathcal R_{\mathrm{FD}}(p_{N\setminus i},q)
-
\mathcal R_{\mathrm{FD}}(p_N,q)
&\approx
-\frac{2}{N-1}
(\mu-\mu_q)^\top z_i
\nonumber\\
&\quad+
\frac{1}{N-1}
\langle G,\Sigma\rangle_F
-
\frac{N}{(N-1)^2}
z_i^\top Gz_i.
\label{eq:fd-loo-taylor}
\end{align}

Using
\begin{equation}
\frac{N}{(N-1)^2}
=
\frac{1}{N-1}
+
O(N^{-2}),
\end{equation}
the sample-dependent part of Eq.~(\ref{eq:fd-loo-taylor}) becomes
\begin{equation}
-\frac{1}{N-1}
\left[
2(\mu-\mu_q)^\top z_i
+
z_i^\top Gz_i
\right]
+
O(N^{-2}).
\end{equation}
Since $z_i=f_i-\mu$, replacing
$2(\mu-\mu_q)^\top z_i$ by
$2(\mu-\mu_q)^\top f_i$ changes the expression only by
the sample-independent constant
$-2(\mu-\mu_q)^\top\mu$.
Therefore,
\begin{equation}
\mathcal R_{\mathrm{FD}}(p_{N\setminus i},q)
-
\mathcal R_{\mathrm{FD}}(p_N,q)
=
\frac{1}{N-1}
r_{\mathrm{FD}}(x_i;p_N)
+
C_N
+
O(N^{-2}),
\label{eq:fd-loo-connection}
\end{equation}
where $C_N$ is independent of $i$ and
\begin{equation}
r_{\mathrm{FD}}(\vx_i;p_N)
=
-
\left[
2(\mu-\mu_q)^\top f_i
+
(f_i-\mu)^\top G(f_i-\mu)
\right].
\end{equation}

Thus, to first order in $1/N$, the FD reward in
Proposition~\ref{prop:fd-effective-reward} agrees with the leave-one-out
change in FD up to a common scale and a sample-independent additive constant.
The additive constant disappears under reward centering, while the common
scale can be absorbed into the reward coefficient. Computationally, this
avoids evaluating FD separately for every leave-one-out population: the
per-sample rewards share the same population moments and matrix $G$.

LOO-FID~\citep{loofid2026} computes this leave-one-out difference directly, and differs from Proposition~\ref{prop:fd-effective-reward} in two respects.
First, its reference moments are those of real images rather than $p_\text{ref}$.
Second, it uses per-dimension variances in place of full covariances, i.e.,
$\Sigma_p=\operatorname{diag}(\sigma_{p,i}^2), \Sigma_q=\operatorname{diag}(\sigma_{q,i}^2)$.
With this specification, $G$ reduces to $\operatorname{diag}(1-\sigma_{q,i}/\sigma_{p,i})$.

\subsubsection{Numerical Stabilization}
\label{app:fd-numerics}

The main numerical difficulty arises from the covariance component of the FD
reward,
\begin{equation}
G_p
=
I
-
\Sigma_q^{1/2}
S^{-1/2}
\Sigma_q^{1/2},
\qquad
S
=
\Sigma_q^{1/2}\Sigma_p\Sigma_q^{1/2}.
\label{eq:fd-numerics-G}
\end{equation}
While the FD value itself contains $S^{1/2}$, its reward requires
$S^{-1/2}$, which can strongly amplify errors in small eigenvalues.
We therefore regularize the covariance estimates using shrinkage.

\textbf{Covariance shrinkage.}
Small covariance eigenvalues are stabilized by applying the same
shrinkage map to the generated and reference covariances:
\begin{equation}
\widetilde\Sigma
=
(1-\nu)\Sigma
+
\nu\,\tau I,
\qquad
\tau
=
\frac{\operatorname{Tr}(\Sigma)}{d},
\qquad
0\leq\nu<1.
\label{eq:fd-cov-shrinkage}
\end{equation}
Shrinkage suppresses noisy low-variance directions while preserving the
matching fixed point: if $\Sigma_p=\Sigma_q$, then
$\widetilde\Sigma_p=\widetilde\Sigma_q$.

Let
\begin{equation}
\widetilde G_p
=
I
-
\widetilde\Sigma_q^{1/2}
\left(
\widetilde\Sigma_q^{1/2}
\widetilde\Sigma_p
\widetilde\Sigma_q^{1/2}
\right)^{-1/2}
\widetilde\Sigma_q^{1/2}.
\end{equation}
Because the shrinkage target
$\tau=\operatorname{Tr}(\Sigma)/d$ itself depends on the covariance, simply
substituting $\widetilde G_p$ for $G_p$ does not give the exact gradient of FD
between the shrunk covariances. From
\begin{equation}
d\widetilde\Sigma_p
=
(1-\nu)d\Sigma_p
+
\frac{\nu}{d}
\operatorname{Tr}(d\Sigma_p)I,
\end{equation}
the corresponding corrected matrix is
\begin{equation}
G_p^{\mathrm{eff}}
=
(1-\nu)\widetilde G_p
+
\frac{\nu}{d}
\operatorname{Tr}(\widetilde G_p)I.
\label{eq:fd-shrinkage-correction}
\end{equation}
We therefore replace $G_p$ by $G_p^{\mathrm{eff}}$ in the covariance term of
the reward. This retains the exact score-function gradient of the FD objective
computed using the shrunk covariances. At the matching fixed point,
$\widetilde G_p=G_p^{\mathrm{eff}}=0$.

\textbf{Remark: FD with within-prompt covariance.}
In the FD experiments, $\mu_p$ is the feature mean over all prompts and $\Sigma_p$ is the within-prompt covariance $\mathbb E_{\vc}[\operatorname{Cov}(f\mid\vc)]$: in both the moments and the covariance term of Eq.~(\ref{eq:fd-effective-reward}), features are centered at the mean of their prompt group. The covariance over all prompts would also include the variation between prompts, which is not the diversity we want to preserve, and a separate covariance for each prompt cannot be estimated from $G=24$ samples. With this choice, the FD functional is not convex in $p$. The gradient identity of Eq.~(\ref{eq:effective_policy_gradient}) and Algorithm~\ref{alg:method} do not rely on convexity and are unaffected, but the minimizer of Eq.~(\ref{eq:general_functional_problem}) is no longer guaranteed to be unique.

\textbf{Moment estimation and diagnostics.}
Following \cite{yang2026fdloss}, we estimate the generated moments using an exponential moving average of the
raw first and second moments,
\begin{equation}
\mu_p
\leftarrow
\kappa\mu_p+(1-\kappa)\mu_{\mathrm{batch}},
\qquad
M_p
\leftarrow
\kappa M_p+(1-\kappa)M_{\mathrm{batch}},
\end{equation}
and form
$\Sigma_p=M_p-\mu_p\mu_p^\top$.
The EMA decouples the population used to estimate the covariance from the current minibatch size, providing a substantially larger effective sample population for the FD statistics.
All covariance construction and eigendecomposition operations are performed in double precision.

\textbf{Calibration of the FD coefficient.}
\label{app:fd-calibration}
We build the reference statistics once from $25\times1152$ samples of $p_\text{ref}$ at
the training sampling settings.
For each feature space $i$, we divide the FD by its noise floor $D_i$, the FD between two
disjoint halves of the reference samples, so that each space is measured in units of its
own finite-sample noise.
In a run that logs the FD reward without optimizing it, we measure each space's reward
spread relative to the advantage spread, $s_i$, after the moment EMA has turned over
(training steps 20--30).
We weight the spaces by $w_i=\bar s/s_i$ so that they contribute equally, and set
$\lambda=\phi/s_w$, where $s_w$ is the spread of the weighted combined reward, accounting
for the correlation between spaces.
The FD reward then contributes a fraction $\phi$ of the advantage spread at the
calibration point; the realized fraction varies over training.
The moment EMA uses $\kappa=0.95$ and covariance shrinkage $\nu=0.02$.

\subsection{DreamSim diversity Reward}
\label{app:dreamsim-reward}

\DreamSimEffectiveReward*

\begin{proof}
Assume $\phi_{\mathrm{DS}}(\vx)\neq 0$ for every $\vx$, so that $z(\vx)$ is defined,
and write
\begin{equation}
s(\vx,\vx')=z(\vx)^\top z(\vx')
\end{equation}
for the cosine similarity.
Since $\|z(\vx)\|_2=1$, $|s(\vx,\vx')|\le 1$, so
every integral of $s$ against finite (signed) measures below is finite; unlike
Appendix~\ref{app:mmd-reward}, no moment assumption on $p$ is needed.

By Eq.~(\ref{eq:dreamsim-diversity}),
$1-\mathrm{Div}_{\mathrm{DS}}(p)=\mathbb E_{\vx,\vx'\sim p}[s(\vx,\vx')]$, so
\begin{equation}
\mathcal R_{\mathrm{DS}}(p)
=
\frac12\iint s(\vx,\vx')\,p(\mathrm d\vx)\,p(\mathrm d\vx').
\label{eq:app-ds-quadratic}
\end{equation}
This has the form of the term $T_2$ in Appendix~\ref{app:mmd-reward}, with $\rho_b$
replaced by $s$ and an overall factor $\tfrac12$.

Let $\eta$ be a signed measure with $\eta(\mathcal X)=0$ and finite total variation
$|\eta|(\mathcal X)$, and replace $p$ by $p+\varepsilon\eta$.
Expanding the product $(p+\varepsilon\eta)(p+\varepsilon\eta)$ in
Eq.~(\ref{eq:app-ds-quadratic}),
\begin{align}
\mathcal R_{\mathrm{DS}}(p+\varepsilon\eta)
&=\mathcal R_{\mathrm{DS}}(p)
+\frac{\varepsilon}{2}\iint s(\vx,\vx')\,\eta(\mathrm d\vx)\,p(\mathrm d\vx')
+\frac{\varepsilon}{2}\iint s(\vx,\vx')\,p(\mathrm d\vx)\,\eta(\mathrm d\vx')
\nonumber\\
&\qquad
+\frac{\varepsilon^2}{2}\iint s(\vx,\vx')\,\eta(\mathrm d\vx)\,\eta(\mathrm d\vx').
\label{eq:app-ds-expansion}
\end{align}
The two first-order terms are equal: renaming the integration variables
$\vx\leftrightarrow\vx'$ in the second and using the symmetry
$s(\vx,\vx')=s(\vx',\vx)$ turns it into the first.
\begin{equation}
\mathcal R_{\mathrm{DS}}(p+\varepsilon\eta)
=\mathcal R_{\mathrm{DS}}(p)
+\varepsilon\int\Bigl[\int s(\vx,\vx')\,p(\mathrm d\vx')\Bigr]\eta(\mathrm d\vx)
+\frac{\varepsilon^2}{2}\iint s(\vx,\vx')\,\eta(\mathrm d\vx)\,\eta(\mathrm d\vx').
\end{equation}
Since $|s|\le 1$, its $\varepsilon^2$ coefficient is bounded in absolute value by
$\tfrac12\,|\eta|(\mathcal X)^2$.
Therefore,
\begin{equation}
\lim_{\varepsilon\to 0}
\frac{\mathcal R_{\mathrm{DS}}(p+\varepsilon\eta)-\mathcal R_{\mathrm{DS}}(p)}{\varepsilon}
=\int\mathbb E_{\vx'\sim p}\bigl[s(\vx,\vx')\bigr]\;\eta(\mathrm d\vx).
\end{equation}

Because $z(\vx)$ does not depend on $\vx'$, it can be taken out of the inner
expectation:
\begin{equation}
\mathbb E_{\vx'\sim p}\bigl[s(\vx,\vx')\bigr]
=\mathbb E_{\vx'\sim p}\bigl[z(\vx)^\top z(\vx')\bigr]
=z(\vx)^\top\mathbb E_{\vx'\sim p}[z(\vx')].
\end{equation}
Comparing with Eq.~(\ref{eq:functional_derivative}), a representative of the
functional derivative is
\begin{equation}
\frac{\delta\mathcal R_{\mathrm{DS}}}{\delta p}(p,\vx)
=z(\vx)^\top\mathbb E_{\vx'\sim p}[z(\vx')],
\end{equation}
and its negative is Eq.~(\ref{eq:dreamsim-effective-reward}).
\end{proof}

\textbf{Group estimate.}
For a group $\{\vx^{i}\}_{i=1}^{G}$ of i.i.d.\ samples from $p$, the population mean
$m(p)$ is estimated from the other samples in the group, excluding the sample itself
(Section~\ref{sec:method_online_rwr}):
\begin{equation}
\hat r_{\mathrm{DS}}(\vx^{i})
=
-\frac{1}{G-1}\sum_{j\neq i}z(\vx^{i})^\top z(\vx^{j}).
\label{eq:app-ds-group-estimate}
\end{equation}
Conditionally on $\vx^{i}$, each $\vx^{j}$ with $j\neq i$ is an independent draw from
$p$, so
\begin{equation}
\mathbb E\bigl[\hat r_{\mathrm{DS}}(\vx^{i})\,\big|\,\vx^{i}\bigr]
=-z(\vx^{i})^\top m(p)
=r_{\mathrm{DS}}(\vx^{i};p)
\end{equation}
exactly; including $j=i$ would instead add the sample-independent term
$-\|z(\vx^i)\|_2^2/G=-1/G$ and rescale by $(G-1)/G$.
Moreover, adding and subtracting $1$ inside the sum,
\begin{equation}
\hat r_{\mathrm{DS}}(\vx^{i})
=
\frac{1}{G-1}\sum_{j\neq i}\Bigl[1-z(\vx^{i})^\top z(\vx^{j})\Bigr]-1,
\end{equation}
so up to the constant $-1$, which group centering removes, the estimate is the mean
DreamSim cosine distance from $\vx^{i}$ to the other samples in its group.

DRIFT~\citep[Section~5.3 and Appendix~C]{drift2026} rewards each sample with the
average squared embedding distance to the other samples in its group,
$d(\vx^{i})=\frac{1}{G-1}\sum_{j\neq i}\|E(\vx^{i})-E(\vx^{j})\|_2^2$.
With the normalized DreamSim feature $E=z$,
$\|z(\vx^{i})-z(\vx^{j})\|_2^2=2-2\,z(\vx^{i})^\top z(\vx^{j})$, so
$d(\vx^{i})=2\,\hat r_{\mathrm{DS}}(\vx^{i})+2$ for every sample: after group
centering, DRIFT's reward is $2\hat r_{\mathrm{DS}}$, and the factor $2$ is absorbed
into $\lambda$.
DRIFT additionally clips this reward and standardizes its advantage separately.

\section{Experiment Details}
\label{app:experiment-details}

\subsection{Evaluation metrics}
\label{app:metrics}

\textbf{Drift.}
We measure drift against $p_\text{ref}$ rather than against a fixed real dataset,
which separates the change induced by fine-tuning from the pretrained model's own
mismatch with real data.
Computing FD per prompt and then averaging measures this change within each conditional
distribution, rather than in the marginal over prompts.
Its absolute values are far larger than standard FD values and are
meaningful only relative to one another.
We compute avg(FD-64) with Inception-v3 and DINOv2 ViT-L/14, and DreamSim diversity with DINO ViT-B/16 and OpenCLIP ViT-B/32.

\textbf{Held-out metrics.}
\methodname-FD computes its reward with ConvNeXt-V2, SigLIP2, and CLIP ViT-L/14
features, and \methodname-DS with the DINO ViT-B/16 branch of DreamSim.
A metric is not held out for an instance if it is computed with one of that instance's
encoders or a closely related one, because the instance optimizes a closely related
quantity during training.
We therefore strike through DreamSim's DINO branch and DINOv2 FD, a later model of the
same self-supervised family, for \methodname-DS.
DreamSim's OpenCLIP branch remains held out for \methodname-FD: it is a ViT-B/32 trained
on different data and fine-tuned for perceptual similarity, not the CLIP ViT-L/14 that
\methodname-FD uses.
Struckout entries are not counted as wins or losses.
\methodname-MMD compares SD3.5-M's own VAE latents and uses no pretrained feature
encoder, so every metric is held out for it.

\subsection{Evaluation protocol}
\label{app:evaluation-protocol}

\textbf{Checkpoint selection.}
During training, we evaluate the target reward every 10 steps on the task's full test
prompt set.
This curve is used to select checkpoints: when all three seeds possess a checkpoint that reaches the threshold within the compute budget, it is considered successful.
The full tables (Appendix~\ref{app:full-table}) also report configurations that do not
reach a threshold: seeds that reach it are evaluated at their first checkpoint that
reaches it and the others at their peak, taking the earliest on ties.
Steps is the mean number of training steps to the threshold over the seeds that reach
it, from which Appendix~\ref{app:speedup} derives the speedups.

\textbf{Evaluation prompts and sampling.}
From the GenEval test set (``g''; 2{,}212 prompts), DrawBench (``d''; 1{,}000), and the
OCR test set (``o''; 1{,}018), we draw a fixed random subset of 256 prompts each.
Models fine-tuned on GenEval are evaluated on g and d, and models fine-tuned on OCR on o
and d.
For every prompt, every model generates 64 images from the same 64 initial-noise seeds,
at $512\times512$ resolution with 20 sampling steps and guidance scale $4.5$; multi-reward
DiffusionNFT is sampled at its prescribed scale of $1.0$.
We run the transformer and text encoders in FP16 and decode with the VAE in FP32.

\textbf{Statistics.}
RAM and \methodname{} entries are the mean $\pm$ sample SD over three seeds; existing
methods are single checkpoints and are reported without SD.
Two configurations are tied on a metric if their means, as displayed, differ by at most
the larger of their two SDs, and otherwise the one with the better mean wins.
In the ImgRwd and PickScore columns of the full tables
(Appendix~\ref{app:full-table}), red marks a value significantly below SD3.5-M's by the
same criterion; for single-seed methods, we use the median seed SD of the RAM rows.

\subsection{Compute budget}
\label{app:compute-budget}

All runs of a task use the same compute: a fixed wall-clock budget of 288 GPU hours for GenEval and 144 for OCR.
The distributional rewards add little time per training step.
Measured on this hardware, \methodname-FD and \methodname-DS are within $3\%$ of
RAM, and \methodname-MMD takes approximately $9\%$ longer, mostly for the pairwise distances
within each group and to the reference samples.
Within the same budget, \methodname-MMD therefore completes correspondingly fewer
steps.
Speedups in Table~\ref{tab:comparison} 
include this overhead.

\subsection{Hyperparameters}
\label{app:hyperparameters}

\begin{table}[h]
\centering
\caption{Training hyperparameters, shared by RAM and every \methodname{} instance on
both tasks. Only $\beta$ and the \methodname{} terms differ between configurations.}
\label{tab:hyperparameters}
\begin{tabular}{lp{0.55\linewidth}}
\toprule
\textbf{Hyperparameter} & \textbf{Value} \\
\midrule
Base model & SD3.5-M \\
Fine-tuning & LoRA, rank 32, scaling 64 \\
Optimizer & AdamW \\
Learning rate & $3\times10^{-4}$ \\
Adam $(\beta_1,\beta_2)$ & $(0.9,\,0.99)$ \\
Weight decay & $10^{-4}$ \\
Precision & FP16 mixed precision \\
Prompts per training step & 48 \\
Group size $G$ (samples per prompt) & 24 \\
Optimizer steps per training step & 1 \\
Sampling during training & 20 steps, guidance scale 2.0, $512\times512$ \\
Loss timesteps per sample & 8, sampled with weight decreasing linearly toward low noise \\
Advantage & group-centered, divided by the reward SD over the training step \\
Advantage scale $1/\alpha$ & 100 \\
Lagged policy $\vv_\text{old}$ & EMA of the trained weights, decay 0.9 \\
Reverse-KL coefficient $\beta$ & swept (Section~\ref{subsec:experimental-setup}) \\
Validation & every 10 steps, guidance scale 4.5 \\
\bottomrule
\end{tabular}
\end{table}

\textbf{\methodname{} coefficients.}
Each instance adds its group-centered reward $r_{\mathcal R}$ to the normalized
advantage with coefficient $\lambda$ (Algorithm~\ref{alg:method}), so $\lambda$ is in
units of the task advantage.
\methodname-MMD uses the energy kernel with exponent $b=1.75$ on SD3.5-M's normalized
VAE latents, with $M=8$ reference samples per prompt from $p_\text{ref}$.
\methodname-DS uses the DINO branch of DreamSim with distances normalized to $[0,1]$.
\methodname-FD combines three feature spaces, each measured in units of its own noise
floor, so its raw scale is not interpretable; we instead set
$\lambda=\phi/s_w$, where $\phi$ is the fraction of the advantage spread that the FD
reward contributes and $s_w$ is calibrated once per task
(Appendix~\ref{app:fd-calibration}).
The tables use $\lambda=0.55$ (MMD), $\phi=0.07$ (FD), and $\lambda=3$ (DS) on GenEval,
and $\lambda=1.5$, $\phi=0.09$, and $\lambda=4$ on OCR.

\textbf{Choice of $\beta$.}
We add \methodname{} at the largest $\beta$ for which RAM reaches the highest threshold
on every seed.
\methodname-DS keeps this $\beta$: its diversity reward does not measurably slow reward
optimization, so it adds diversity at no cost in speed rather than acting as further
regularization.
\methodname-MMD and \methodname-FD also pull the policy toward $p_\text{ref}$; on OCR
they no longer reach the threshold on every seed at $\beta=5$, so we use $\beta=4$.

\subsection{Baselines}
\label{app:baselines}

We compare with ELBO-based DiffusionNFT and AWM, and with trajectory-based Flow-GRPO and
zeroth-order RSM.
For Flow-GRPO, we use the original released GenEval checkpoint (revision
\texttt{5efdfea}).
For multi-reward DiffusionNFT, we use the released checkpoint, and sample at its prescribed guidance scale of $1.0$.
We reproduce GenEval-only DiffusionNFT, AWM, and RSM with one seed each, using their
released code and default settings, with the exception that we adjust AWM's effective batch size configuration to match other methods.
We evaluate each reproduced run at its first checkpoint whose test reward reaches
$0.95$; RSM's reward jumps from $0.94$ to $0.97$ between consecutive checkpoints, so we
evaluate it at $0.94$.
Their speedups are those reported by the respective papers
(Appendix~\ref{app:speedup}).

\subsection{Full Table}
\label{app:full-table}

Tables~\ref{tab:comparison-full} and~\ref{tab:beta-sweep-full} extend Tables~\ref{tab:comparison} and~\ref{tab:beta-sweep} with the DINO branch of DreamSim, held-out preference rewards, and the number of training steps to each threshold.
They also include the configurations that do not reach a threshold on every seed, evaluated at the first checkpoint that reaches it or, failing that, at their peak (Appendix~\ref{app:evaluation-protocol}); these rows are evaluated at a lower reward and are therefore not used to color the \methodname{} entries.

\textbf{DreamSim branches.}
The DINO branch agrees with the OpenCLIP branch reported in the main text: on GenEval, \methodname-MMD and \methodname-FD win against every RAM configuration reaching the same threshold on the test prompts at both thresholds, and on OCR they tie or win.

\textbf{Preference rewards.}
No \methodname{} instance scores significantly below SD3.5-M on GenEval, whereas several existing methods do on DrawBench.
On OCR, RAM ($\beta=2$) falls significantly below SD3.5-M on both preference rewards and both prompt sets, consistent with the instability of small $\beta$ (Appendix~\ref{app:stability}); \methodname-DS and RAM ($\beta=5$) are marginally below it in PickScore on DrawBench.

\textbf{Multi-reward training.}
Multi-reward DiffusionNFT, trained on PickScore and related preference rewards in addition to GenEval, raises both preference rewards well above those of every other method, yet drifts more than GenEval-only DiffusionNFT in DINOv2 features ($1468/1702$ versus $1352/1437$) and is no more diverse ($0.102/0.106$ versus $0.109/0.143$).
Training on more rewards is therefore not in itself a remedy for distribution drift or mode collapse.

\begin{table*}[h]
\centering
\caption{All metrics for models fine-tuned on GenEval, at thresholds $0.95$ and $0.96$. Reward is the threshold for configurations that reach it and, otherwise, the mean GenEval at the evaluated checkpoints; Steps is the mean number of training steps to the threshold. Entries of the form $a/b$ report results on the GenEval test prompts ($a$) and on DrawBench ($b$), as means over three seeds with SD beneath; existing methods are single checkpoints. Greyed-out rows ($^\dagger$) do not reach the threshold on every seed. Blue and struck-through entries are as in Table~\ref{tab:comparison}; in the ImgRwd (ImageReward) and PickScore columns, red marks a value significantly below that of SD3.5-M (Appendix~\ref{app:evaluation-protocol}).}
\label{tab:comparison-full}
\resizebox{\textwidth}{!}{
\begin{tabular}{lcccccccc}
\toprule
\textbf{Method} & \textbf{Reward} & \textbf{Steps} $(\downarrow)$ & \textbf{FID} $(\downarrow)$ & \textbf{FD-DINOv2} $(\downarrow)$ & \textbf{DreamSim-DINO} $(\uparrow)$ & \textbf{DreamSim-OpenCLIP} $(\uparrow)$ & \textbf{ImgRwd} $(\uparrow)$ & \textbf{PickScore} $(\uparrow)$ \\
\midrule
\multicolumn{9}{l}{\emph{GenEval, threshold $0.95$}} \\
SD3.5-M & 0.63 & -- & -- & -- & 0.405/0.372 & 0.302/0.274 & 0.840/0.859 & 0.872/0.863 \\
\midrule
Flow-GRPO & 0.95 & -- & 114.32/125.61 & 1019/997 & 0.224/0.274 & 0.155/0.189 & 1.222/1.013 & 0.883/0.862 \\
RSM$^\S$ & 0.94 & -- & 132.82/154.29 & 1271/1310 & 0.184/0.249 & 0.142/0.183 & 1.082/0.838 & 0.869/\textcolor{red}{0.851} \\
AWM$^{*}$ & 0.95 & -- & 158.88/183.71 & 1412/1578 & 0.124/0.173 & 0.095/0.132 & 1.028/\textcolor{red}{0.573} & 0.869/\textcolor{red}{0.840} \\
DiffusionNFT & 0.96 & -- & 144.36/159.22 & 1352/1437 & 0.151/0.204 & 0.109/0.143 & 1.232/1.091 & 0.876/\textcolor{red}{0.851} \\
DiffusionNFT (multi)$^\ddagger$ & 0.94 & -- & 145.06/175.20 & 1468/1702 & 0.143/0.141 & 0.102/0.106 & \sout{1.481}/\sout{1.457} & \sout{0.931}/\sout{0.915} \\
\midrule
RAM ($\beta=1$) & 0.95 & \mstd{157}{55} & \mstd{132.93/144.77}{16.19/16.96} & \mstd{1211/1236}{116/185} & \mstd{0.225/0.276}{0.037/0.027} & \mstd{0.163/0.193}{0.021/0.021} & \mstd{1.151/1.013}{0.109/0.152} & \mstd{0.872/0.857}{0.010/0.010} \\
RAM ($\beta=2$) & 0.95 & \mstd{143}{15} & \mstd{116.99/122.78}{1.69/1.02} & \mstd{1078/1002}{34/11} & \mstd{0.239/0.301}{0.019/0.012} & \mstd{0.170/0.208}{0.009/0.006} & \mstd{1.209/1.069}{0.016/0.014} & \mstd{0.880/0.865}{0.002/0.001} \\
RAM ($\beta=3$) & 0.95 & \mstd{187}{21} & \mstd{113.19/122.06}{2.83/1.84} & \mstd{1030/991}{11/9} & \mstd{0.260/0.310}{0.009/0.011} & \mstd{0.184/0.214}{0.006/0.010} & \mstd{1.226/1.082}{0.020/0.009} & \mstd{0.884/0.866}{0.002/0.001} \\
\textcolor{gray}{RAM ($\beta=4$)$^\dagger$} & \textcolor{gray}{0.94} & \textcolor{gray}{\mstd{210}{28}} & \textcolor{gray}{\mstd{115.49/126.77}{7.07/14.99}} & \textcolor{gray}{\mstd{1059/1053}{107/168}} & \textcolor{gray}{\mstd{0.249/0.305}{0.030/0.023}} & \textcolor{gray}{\mstd{0.181/0.220}{0.015/0.012}} & \textcolor{gray}{\mstd{1.204/0.964}{0.044/0.157}} & \textcolor{gray}{\mstd{0.884/0.862}{0.003/0.007}} \\
\methodname-MMD ($\beta=2$, $\lambda=0.55$) & 0.95 & \mstd{183}{40} & \mstd{111.20/120.64}{4.10/6.17} & \mstd{\textcolor{blue}{990}/967}{27/52} & \mstd{\textcolor{blue}{0.295}/\textcolor{blue}{0.326}}{0.025/0.014} & \mstd{\textcolor{blue}{0.200}/0.224}{0.015/0.011} & \mstd{1.200/1.049}{0.042/0.043} & \mstd{0.882/0.863}{0.005/0.003} \\
\methodname-FD ($\beta=2$) & 0.95 & \mstd{223}{29} & \mstd{111.02/\textcolor{blue}{117.09}}{2.00/4.88} & \mstd{\textcolor{blue}{980}/\textcolor{blue}{932}}{20/46} & \mstd{\textcolor{blue}{0.282}/\textcolor{blue}{0.332}}{0.005/0.007} & \mstd{\textcolor{blue}{0.199}/\textcolor{blue}{0.231}}{0.003/0.006} & \mstd{1.199/1.034}{0.013/0.048} & \mstd{0.882/0.866}{0.003/0.002} \\
\methodname-DS ($\beta=2$) & 0.95 & \mstd{197}{25} & \mstd{114.60/121.86}{1.64/4.22} & \mstd{\sout{1034}/\sout{992}}{\sout{37}/\sout{82}} & \mstd{\sout{0.325}/\sout{0.344}}{\sout{0.017}/\sout{0.012}} & \mstd{\textcolor{blue}{0.218}/\textcolor{blue}{0.233}}{0.007/0.010} & \mstd{1.185/1.032}{0.018/0.041} & \mstd{0.881/0.863}{0.006/0.005} \\
\midrule
\multicolumn{9}{l}{\emph{GenEval, threshold $0.96$}} \\
SD3.5-M & 0.63 & -- & -- & -- & 0.405/0.372 & 0.302/0.274 & 0.840/0.859 & 0.872/0.863 \\
\midrule
\textcolor{gray}{RAM ($\beta=1$)$^\dagger$} & \textcolor{gray}{0.95} & \textcolor{gray}{\mstd{195}{49}} & \textcolor{gray}{\mstd{127.57/137.62}{8.73/10.49}} & \textcolor{gray}{\mstd{1171/1166}{86/108}} & \textcolor{gray}{\mstd{0.229/0.277}{0.012/0.016}} & \textcolor{gray}{\mstd{0.161/0.190}{0.007/0.008}} & \textcolor{gray}{\mstd{1.208/1.057}{0.053/0.056}} & \textcolor{gray}{\mstd{0.875/0.859}{0.008/0.005}} \\
RAM ($\beta=2$) & 0.96 & \mstd{210}{20} & \mstd{119.87/125.99}{2.03/3.89} & \mstd{1066/1023}{31/33} & \mstd{0.249/0.310}{0.004/0.005} & \mstd{0.174/0.213}{0.006/0.006} & \mstd{1.202/1.046}{0.035/0.060} & \mstd{0.884/0.865}{0.003/0.002} \\
\textcolor{gray}{RAM ($\beta=3$)$^\dagger$} & \textcolor{gray}{0.95} & \textcolor{gray}{220} & \textcolor{gray}{\mstd{113.91/121.93}{6.49/5.44}} & \textcolor{gray}{\mstd{1041/993}{49/53}} & \textcolor{gray}{\mstd{0.254/0.308}{0.006/0.006}} & \textcolor{gray}{\mstd{0.181/0.214}{0.005/0.008}} & \textcolor{gray}{\mstd{1.231/1.087}{0.019/0.008}} & \textcolor{gray}{\mstd{0.885/0.866}{0.004/0.003}} \\
\textcolor{gray}{RAM ($\beta=4$)$^\dagger$} & \textcolor{gray}{0.95} & -- & \textcolor{gray}{\mstd{116.29/129.16}{6.13/13.02}} & \textcolor{gray}{\mstd{1065/1073}{102/156}} & \textcolor{gray}{\mstd{0.247/0.305}{0.030/0.023}} & \textcolor{gray}{\mstd{0.180/0.221}{0.015/0.011}} & \textcolor{gray}{\mstd{1.201/0.955}{0.037/0.147}} & \textcolor{gray}{\mstd{0.884/0.862}{0.004/0.007}} \\
\methodname-MMD ($\beta=2$) & 0.96 & \mstd{223}{47} & \mstd{\textcolor{blue}{110.75}/121.67}{3.04/6.05} & \mstd{\textcolor{blue}{992}/977}{5/47} & \mstd{\textcolor{blue}{0.290}/0.321}{0.021/0.015} & \mstd{\textcolor{blue}{0.196}/0.221}{0.011/0.009} & \mstd{1.209/1.030}{0.016/0.025} & \mstd{0.884/0.864}{0.002/0.001} \\
\methodname-FD ($\beta=2$) & 0.96 & \mstd{260}{26} & \mstd{\textcolor{blue}{111.95}/\textcolor{blue}{120.40}}{4.11/4.05} & \mstd{\textcolor{blue}{994}/\textcolor{blue}{969}}{46/38} & \mstd{\textcolor{blue}{0.273}/0.319}{0.021/0.018} & \mstd{\textcolor{blue}{0.193}/0.223}{0.011/0.013} & \mstd{1.209/1.027}{0.005/0.032} & \mstd{0.883/0.865}{0.004/0.003} \\
\methodname-DS ($\beta=2$) & 0.96 & \mstd{240}{44} & \mstd{\textcolor{blue}{114.47}/124.11}{0.31/3.88} & \mstd{\sout{1052}/\sout{1007}}{\sout{8}/\sout{45}} & \mstd{\sout{0.332}/\sout{0.349}}{\sout{0.028}/\sout{0.019}} & \mstd{\textcolor{blue}{0.225}/\textcolor{blue}{0.240}}{0.022/0.018} & \mstd{1.170/0.984}{0.046/0.082} & \mstd{0.878/0.860}{0.004/0.004} \\
\bottomrule
\end{tabular}
}
\smallskip
\begin{minipage}{\linewidth}
\scriptsize
$^\dagger$Not every seed reaches the threshold; Steps averages the seeds that do. \\
$^\S$, $^*$, $^\ddagger$ as in Table~\ref{tab:comparison}; multi-reward DiffusionNFT is trained on PickScore and closely related preference rewards, so its held-out rewards are struck through.
\end{minipage}
\end{table*}

\begin{table*}[h]
\centering
\caption{All metrics for models fine-tuned on OCR, at threshold $0.96$. Columns and markings are as in Table~\ref{tab:comparison-full}, except that entries report results on the OCR test prompts ($a$) and on DrawBench ($b$).}
\label{tab:beta-sweep-full}
\resizebox{\textwidth}{!}{
\begin{tabular}{lcccccccc}
\toprule
\textbf{Method} & \textbf{Reward} & \textbf{Steps} $(\downarrow)$ & \textbf{FID} $(\downarrow)$ & \textbf{FD-DINOv2} $(\downarrow)$ & \textbf{DreamSim-DINO} $(\uparrow)$ & \textbf{DreamSim-OpenCLIP} $(\uparrow)$ & \textbf{ImgRwd} $(\uparrow)$ & \textbf{PickScore} $(\uparrow)$ \\
\midrule
\multicolumn{9}{l}{\emph{OCR, threshold $0.96$}} \\
SD3.5-M & 0.59 & -- & -- & -- & 0.307/0.372 & 0.258/0.274 & 1.004/0.859 & 0.863/0.863 \\
\midrule
RAM ($\beta=2$) & 0.96 & \mstd{67}{12} & \mstd{219.07/131.38}{24.04/7.71} & \mstd{1270/1040}{232/84} & \mstd{0.248/0.347}{0.016/0.010} & \mstd{0.128/0.245}{0.010/0.009} & \mstd{\textcolor{red}{0.545}/\textcolor{red}{0.712}}{0.251/0.133} & \mstd{\textcolor{red}{0.849}/\textcolor{red}{0.855}}{0.009/0.004} \\
\textcolor{gray}{RAM ($\beta=3$)$^\dagger$} & \textcolor{gray}{0.95} & \textcolor{gray}{\mstd{100}{0}} & \textcolor{gray}{\mstd{149.72/104.84}{5.66/3.13}} & \textcolor{gray}{\mstd{728/798}{40/58}} & \textcolor{gray}{\mstd{0.272/0.349}{0.019/0.011}} & \textcolor{gray}{\mstd{0.167/0.246}{0.007/0.004}} & \textcolor{gray}{\mstd{1.039/0.894}{0.054/0.048}} & \textcolor{gray}{\mstd{0.869/0.862}{0.003/0.002}} \\
RAM ($\beta=4$) & 0.96 & \mstd{123}{15} & \mstd{138.04/100.94}{6.73/4.36} & \mstd{667/751}{46/56} & \mstd{0.291/0.355}{0.010/0.008} & \mstd{0.185/0.249}{0.004/0.005} & \mstd{1.062/0.922}{0.064/0.045} & \mstd{0.867/0.862}{0.003/0.002} \\
RAM ($\beta=5$) & 0.96 & \mstd{137}{6} & \mstd{136.78/97.16}{5.35/2.14} & \mstd{662/721}{39/21} & \mstd{0.292/0.358}{0.003/0.004} & \mstd{0.192/0.252}{0.005/0.003} & \mstd{1.040/0.886}{0.043/0.030} & \mstd{0.867/\textcolor{red}{0.861}}{0.002/0.001} \\
\textcolor{gray}{RAM ($\beta=6$)$^\dagger$} & \textcolor{gray}{0.95} & \textcolor{gray}{\mstd{180}{28}} & \textcolor{gray}{\mstd{121.69/93.58}{5.38/2.03}} & \textcolor{gray}{\mstd{565/695}{31/26}} & \textcolor{gray}{\mstd{0.285/0.345}{0.006/0.011}} & \textcolor{gray}{\mstd{0.192/0.244}{0.003/0.010}} & \textcolor{gray}{\mstd{1.100/0.928}{0.031/0.023}} & \textcolor{gray}{\mstd{0.871/0.863}{0.002/0.002}} \\
\methodname-MMD ($\beta=4$) & 0.96 & \mstd{153}{12} & \mstd{\textcolor{blue}{124.87}/95.37}{0.83/1.60} & \mstd{\textcolor{blue}{597}/708}{5/16} & \mstd{0.300/0.367}{0.006/0.012} & \mstd{0.197/\textcolor{blue}{0.261}}{0.006/0.008} & \mstd{1.057/0.862}{0.005/0.017} & \mstd{0.868/\textcolor{red}{0.861}}{0.001/0.001} \\
\methodname-FD ($\beta=4$) & 0.96 & \mstd{183}{23} & \mstd{\textcolor{blue}{121.46}/\textcolor{blue}{94.56}}{4.52/1.20} & \mstd{\textcolor{blue}{585}/705}{20/13} & \mstd{0.295/0.359}{0.009/0.010} & \mstd{\textcolor{blue}{0.213}/0.255}{0.004/0.008} & \mstd{1.097/0.886}{0.062/0.057} & \mstd{0.870/0.861}{0.004/0.004} \\
\methodname-DS ($\beta=5$) & 0.96 & \mstd{143}{15} & \mstd{136.13/99.23}{4.59/3.20} & \mstd{\sout{653}/\sout{743}}{\sout{37}/\sout{27}} & \mstd{\sout{0.355}/\sout{0.399}}{\sout{0.010}/\sout{0.004}} & \mstd{\textcolor{blue}{0.224}/\textcolor{blue}{0.281}}{0.008/0.003} & \mstd{1.013/0.837}{0.035/0.024} & \mstd{0.864/\textcolor{red}{0.858}}{0.002/0.001} \\
\bottomrule
\end{tabular}
}
\smallskip
\begin{minipage}{\linewidth}
\footnotesize
$^\dagger$Not every seed reaches the threshold; Steps averages the seeds that do.
\end{minipage}
\end{table*}

\subsection{Speedups}
\label{app:speedup}
Table~\ref{tab:speedup} lists the speedups over Flow-GRPO reported by existing methods and
derives ours: we measure the time to reach the threshold relative to RAM ($\beta=1$) on
GenEval and RAM ($\beta=2$) on OCR, as training steps times the time per step on the same
hardware (Appendix~\ref{app:compute-budget}), and scale RAM's reported speedup accordingly.

\begin{table*}[h]
\centering
\caption{Speedups. Top: speedups over Flow-GRPO reported by the respective papers. Bottom: measured from our runs. Time is the number of steps to the threshold multiplied by the time per step, both relative to the RAM configuration stated in each block. GenEval speedups divide the reported speedup of RAM ($\beta=1$) by Time; OCR has no Flow-GRPO reference and reports Time only. Means with SD beneath are over seeds; ``--'' marks a threshold not reached on every seed.}
\label{tab:speedup}
\small
\begin{tabular}{llp{0.62\linewidth}}
\toprule
\textbf{Method} & \textbf{Speedup} & \textbf{Source} \\
\midrule
Flow-GRPO & 1 & reference \\
RSM & 6.25 & $5\times$ over TempFlow-GRPO~\citep{lee2026rewardscorematchingunifying}, which reports $1.25\times$ over Flow-GRPO~\citep{he2025tempflowgrpo} \\
AWM & 8 & reported by \citet{xue2025advantageweightedmatchingaligning} \\
DiffusionNFT & 24 & reported by \citet{zheng2025diffusionnft} \\
DiffusionNFT (multi) & 2.28 & reported by \citet{choi2026rethinking} \\
RAM ($\beta=1$) & 52.3 & $50\times$ fewer steps, times the ratio of time per step~\citep{bergmeister2026reinforceadjointmatchingscaling} \\
\bottomrule
\end{tabular}
\vspace{6pt}

\begin{tabular}{lcccc}
\toprule
\textbf{Method} & \textbf{Steps (per seed)} & \textbf{Time per step} & \textbf{Time} $(\downarrow)$ & \textbf{Speedup} $(\uparrow)$ \\
\midrule
\multicolumn{5}{l}{\emph{GenEval, threshold $0.95$; Time relative to RAM ($\beta=1$)}} \\
RAM ($\beta=1$) & 160/100/210 & 1.00 & 1.00 & 52.3 \\
RAM ($\beta=2$) & 160/140/130 & 1.00 & \mstd{0.91}{0.10} & \mstd{57.2}{6.1} \\
RAM ($\beta=3$) & 210/180/170 & 1.00 & \mstd{1.19}{0.13} & \mstd{43.9}{4.9} \\
RAM ($\beta=4$) & -- & 1.00 & -- & -- \\
\methodname-MMD ($\beta=2$) & 230/160/160 & 1.09 & \mstd{1.27}{0.28} & \mstd{41.1}{9.1} \\
\methodname-FD ($\beta=2$) & 240/240/190 & 1.03 & \mstd{1.47}{0.19} & \mstd{35.7}{4.6} \\
\methodname-DS ($\beta=2$) & 200/170/220 & 1.03 & \mstd{1.29}{0.17} & \mstd{40.4}{5.2} \\
\midrule
\multicolumn{5}{l}{\emph{OCR, threshold $0.96$; Time relative to RAM ($\beta=2$)}} \\
RAM ($\beta=2$) & 60/80/60 & 1.00 & 1.00 & -- \\
RAM ($\beta=3$) & -- & 1.00 & -- & -- \\
RAM ($\beta=4$) & 110/120/140 & 1.00 & \mstd{1.85}{0.23} & -- \\
RAM ($\beta=5$) & 140/130/140 & 1.00 & \mstd{2.05}{0.09} & -- \\
RAM ($\beta=6$) & -- & 1.00 & -- & -- \\
\methodname-MMD ($\beta=4$) & 160/140/160 & 1.09 & \mstd{2.50}{0.19} & -- \\
\methodname-FD ($\beta=4$) & 170/210/170 & 0.98 & \mstd{2.70}{0.34} & -- \\
\methodname-DS ($\beta=5$) & 160/130/140 & 0.97 & \mstd{2.08}{0.22} & -- \\
\bottomrule
\end{tabular}
\end{table*}

\subsection{Held-out rewards}
\label{app:heldout-rewards}
\begin{figure}[h]
\centering
\includegraphics[width=0.6\linewidth]{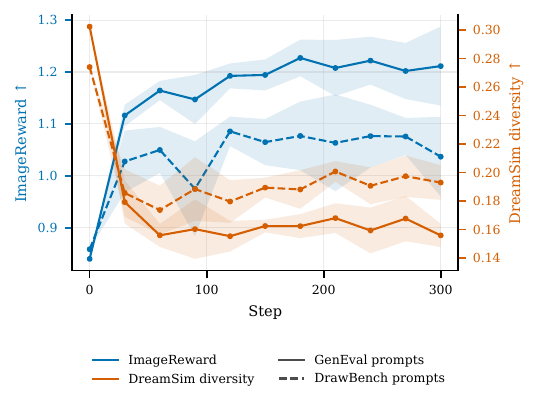}
\caption{ImageReward (left axis) and DreamSim diversity (right axis) of RAM ($\beta=1$)
during training on GenEval, on the GenEval (solid) and DrawBench (dashed) prompts,
evaluated every 30 steps; step~0 is SD3.5-M. Mean over three seeds with $\pm1$ SD
bands.}
\label{fig:heldout-rewards}
\end{figure}

Figure~\ref{fig:heldout-rewards} evaluates checkpoints of RAM ($\beta=1$) every 30 steps
with the protocol of Appendix~\ref{app:evaluation-protocol}.
Within the first 30 steps, ImageReward rises from $0.84$ to $1.12$ on the GenEval prompts
and from $0.86$ to $1.03$ on DrawBench, while DreamSim diversity falls from $0.302$ to
$0.179$ and from $0.274$ to $0.186$.
Over the rest of training, ImageReward keeps rising on the GenEval prompts, to
$1.19$--$1.23$ from step $120$ on, and stays between $0.98$ and $1.09$ on DrawBench, while
diversity stays far below SD3.5-M's, at $0.155$--$0.168$ and $0.174$--$0.201$ from step
$60$ on.
ImageReward thus rates a model higher than SD3.5-M even after its diversity on the
GenEval prompts has roughly halved: it scores each image in isolation and does not
register that the images for a prompt have become alike.
We therefore report held-out rewards (Appendix~\ref{app:full-table}) but do not use them
to measure reward hacking.

\subsection{Training stability across $\beta$}
\label{app:stability}

\begin{figure}[h]
    \centering

    \begin{subfigure}[t]{0.48\linewidth}
        \centering
        \includegraphics[width=\linewidth]{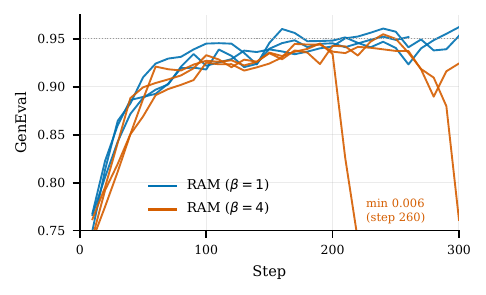}
        \caption{GenEval.}
        \label{fig:stability-geneval}
    \end{subfigure}
    \hfill
    \begin{subfigure}[t]{0.48\linewidth}
        \centering
        \includegraphics[width=\linewidth]
        {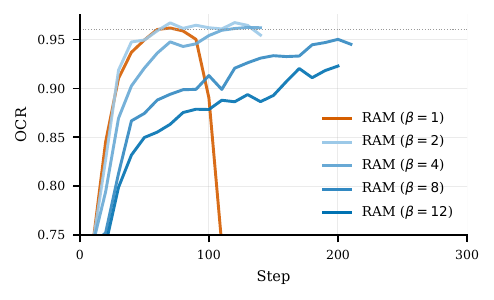}
        \caption{OCR.}
        \label{fig:stability-ocr}
    \end{subfigure}

    \caption{
        Target reward on held-out prompts during RAM training.
        \textbf{Left:} GenEval at $\beta=1$ and $\beta=4$, with one line per
        seed and the target indicated at $0.95$. At $\beta=1$, no seed
        collapses within the training budget; at $\beta=4$, every seed declines
        after peaking, and one collapses.
        \textbf{Right:} OCR for $\beta\in\{1,2,4,8,12\}$, using 1 seed for each
        coefficient and a target of $0.96$. The weakly regularized
        $\beta=1$ run collapses after reaching its peak, whereas the more
        strongly regularized runs remain stable over their observed training
        ranges, at the cost of slower reward improvement for $\beta=8$ and
        $\beta=12$.
    }
    \label{fig:stability}
\end{figure}

Training is unstable at both ends of the range of $\beta$, in different ways
(Figure~\ref{fig:stability}).
At $\beta=1$, runs disagree: the seed SDs of FD at the evaluated checkpoints are
$3$--$10\times$ those at $\beta\in\{2,3\}$ (Table~\ref{tab:comparison-full}), and the
reward dips by up to $0.037$ below its running maximum, but every dip recovers and no
seed collapses within the budget.
On OCR, the smallest $\beta=2$ likewise has FD seed SDs $4$--$6\times$ those at
$\beta\in\{4,5\}$ (Table~\ref{tab:beta-sweep-full}).
At $\beta=4$ of GenEval, training collapses: every seed peaks between steps $170$ and $240$ and then
declines.
On OCR, larger $\beta$ plateaus rather than collapses: RAM peaks at
$0.950$ at $\beta=8$ and $0.923$ at $\beta=12$.

\section{Additional Results}
\subsection{Qualitative comparison between generated images}
\label{sec:qualitative-image-comparison}
\looseness=-1
We demonstrate that \methodname produces a distribution that more closely matches that of the base SD3.5 model (see below). For the donut example, \methodname preserves the color of the glaze better than the RAM baseline. For the four-clock example, \methodname captures both the clock colors and the mix of Roman and Arabic numerals, whereas the RAM baseline collapses these variations. These examples qualitatively highlight the strengths of \methodname.
\begin{figure}[h]
\centering
\includegraphics[width=\linewidth]{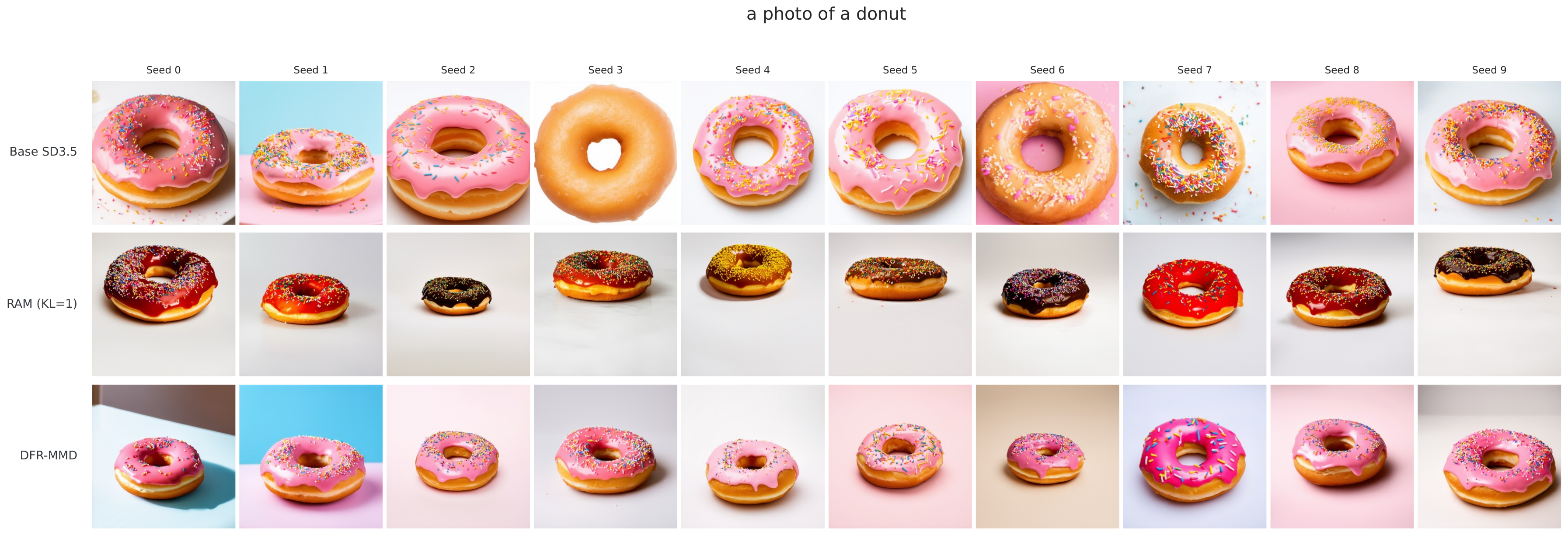}
\caption{\methodname qualitatively appears closer to the pretrained distribution;\ the colour of the glazing on the donut is preserved in \methodname, while this does not hold for the RAM baseline.}
\label{fig:donut-model-image-samples}
\end{figure}

\begin{figure}[h]
\centering
\includegraphics[width=\linewidth]{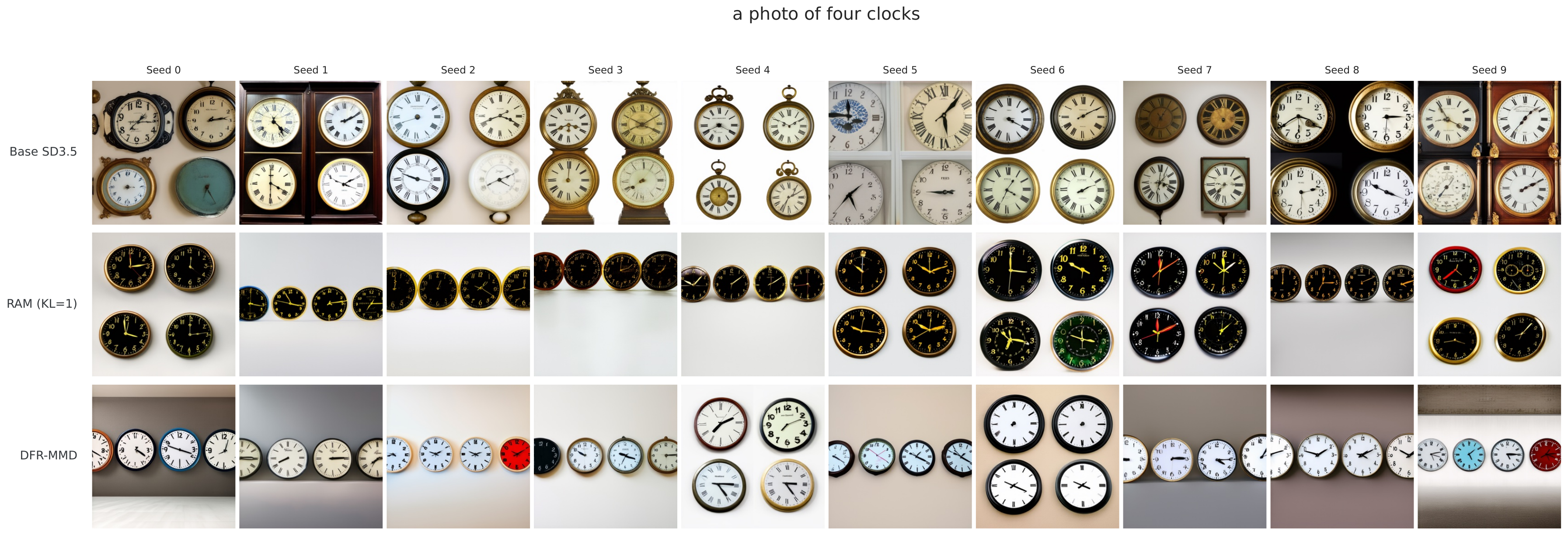}
\caption{\methodname captures both the clock colors and the mix of Roman and Arabic numerals, preserving variations from the pretrained SD3.5 distribution, whereas the RAM baseline collapses these variations.}
\label{fig:clocks-model-image-samples}
\end{figure}

\subsection{Samples from baselines}
We include samples generated from the base SD3.5 model, RAM, and \methodname below. 
\begin{figure}[h]
\centering
\includegraphics[width=0.75\linewidth]{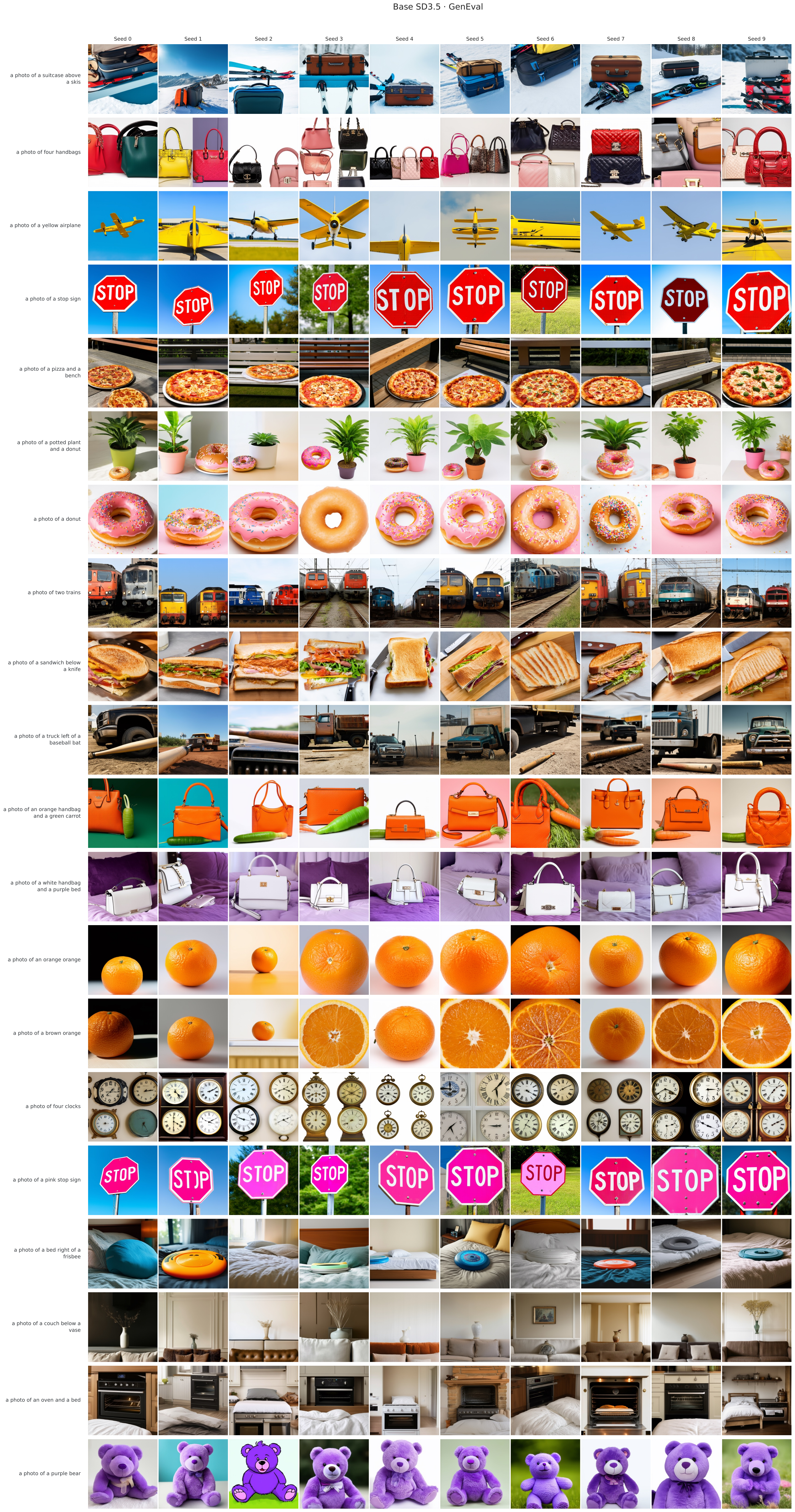}
\caption{Sample images generated for diverse prompts across various seeds using the base SD3.5 model.}
\label{fig:base-model-image-samples}
\end{figure}

\newpage\clearpage
\begin{figure}[h]
\centering
\includegraphics[width=0.75\linewidth]{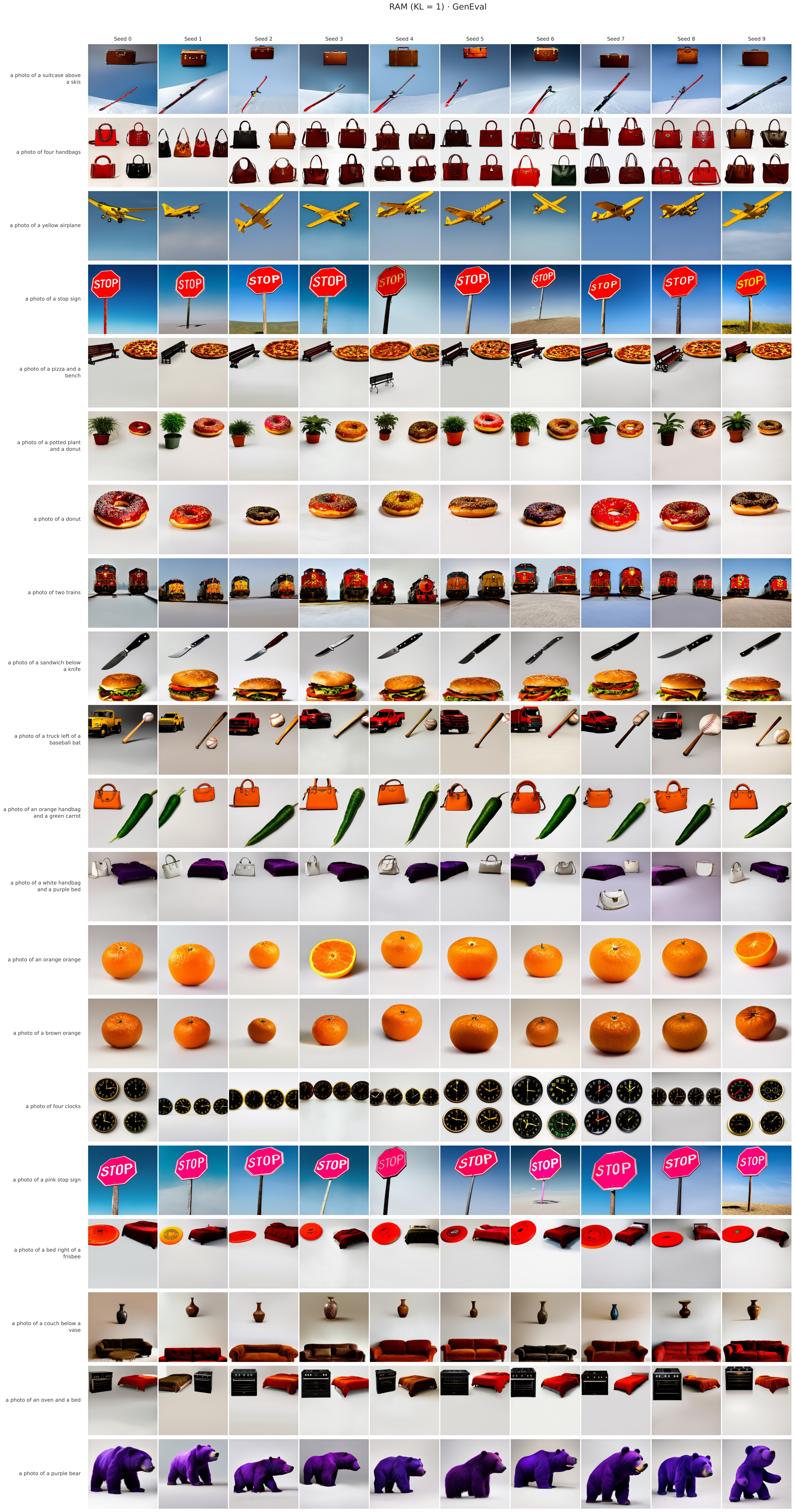}
\caption{Sample images generated for diverse prompts across various seeds using the RAM baseline.}
\label{fig:ram-model-image-samples}
\end{figure}

\newpage\clearpage
\begin{figure}[h]
\centering
\includegraphics[width=0.75\linewidth]{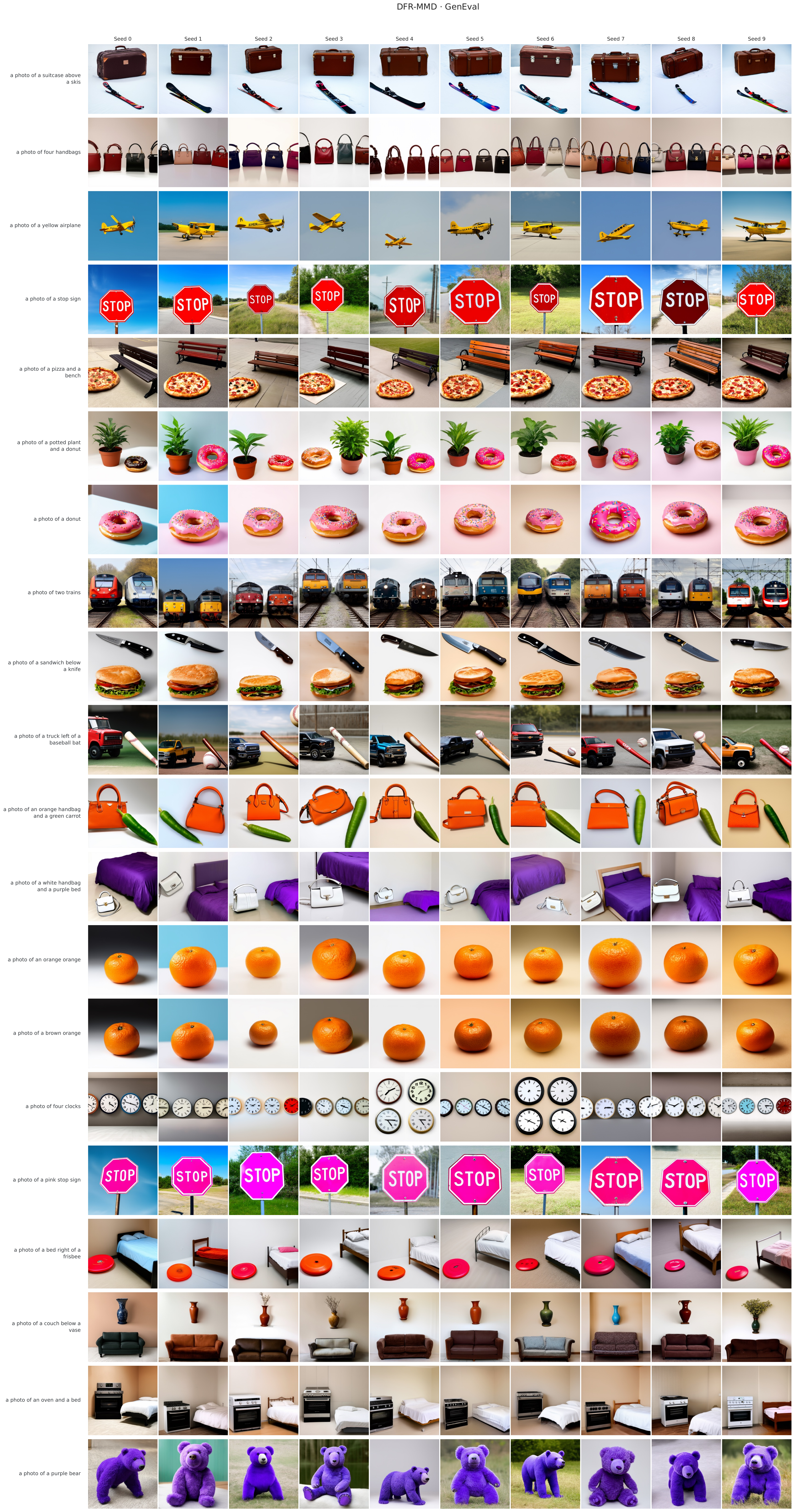}
\caption{Sample images generated for diverse prompts across various seeds using \methodname.}
\label{fig:dfr-model-image-samples}
\end{figure}

\end{document}